  \providecommand\BibTeX{{%
    \normalfont B\kern-0.5em{\scshape i\kern-0.25em b}\kern-0.8em\TeX}}}
\def\layersep{2.5cm}
\newcommand{\R}{\mathbb{R}}
\newcommand{\x}{\bm{x}}
\renewcommand{\c}{\bm{c}}
\newcommand{\deeppac}{{\sc DeepPAC}\xspace}
\newcommand{\eran}{{\sc ERAN}\xspace}
\newcommand{\provero}{{\sc PROVERO}\xspace}
\newcommand{\deepgini}{{\sc DeepGini}\xspace}
\newcommand{\commentout}[1]{}
\definecolor{LightCyan}{rgb}{0.88,1,1}
\definecolor{Gray}{gray}{0.9}
\begin{document}

\title{Towards Practical Robustness Analysis for DNNs based on PAC-Model Learning}

\author{Renjue Li}
\email{lirj19@ios.ac.cn}
\affiliation{%
  \institution{SKLCS, Institute of Software, CAS \and
	University of Chinese Academy of Sciences}
  \country{China}
}

\author{Pengfei Yang}
\email{yangpf@ios.ac.cn}
\authornote{Corresponding authors}
\affiliation{%
  \institution{SKLCS, Institute of Software, CAS}
  \country{China}
}

\author{Cheng-Chao Huang}
\email{chengchao@nj.iscas.ac.cn}
\affiliation{%
  \institution{Nanjing Institute of Software Technology, ISCAS \and
  Pazhou Lab}
  \country{China}
}

\author{Youcheng Sun}
\email{youcheng.sun@qub.ac.uk}
\affiliation{%
  \institution{Queen's University Belfast}
  \country{United Kingdom}
}

\author{Bai Xue}
\email{xuebai@ios.ac.cn}
\affiliation{%
  \institution{SKLCS, Institute of Software, CAS \and
	University of Chinese Academy of Sciences}
  \country{China}
}

\author{Lijun Zhang}
\email{zhanglj@ios.ac.cn}
\authornotemark[1]
\affiliation{%
  \institution{SKLCS, Institute of Software, CAS \and
	University of Chinese Academy of Sciences}
  \country{China}
}

\renewcommand{\shortauthors}{Renjue Li, et al.}

\begin{abstract}

To analyse local robustness properties of deep neural networks (DNNs), we present a practical framework from a model learning perspective. Based on black-box model learning with scenario optimisation, we abstract the local behaviour of a DNN via an affine model with the probably approximately correct (PAC) guarantee. From the learned model, we can infer the corresponding PAC-model robustness property. The innovation of our work is the integration of model learning into PAC robustness analysis: that is, we construct a PAC guarantee on the model level instead of sample distribution, which induces a more faithful and accurate robustness evaluation. This is in contrast to existing statistical methods without model learning. 
We implement our method in a prototypical tool named \deeppac. As a black-box method, \deeppac is scalable and efficient, especially when DNNs have complex structures or high-dimensional inputs. We extensively evaluate \deeppac, with 4 baselines (using formal verification, statistical methods, testing and adversarial attack) and 20 DNN models across 3 datasets, including MNIST, CIFAR-10, and ImageNet. It is shown that \deeppac outperforms the state-of-the-art statistical method \provero, and it achieves more practical robustness analysis than the formal verification tool \eran. Also, its results are consistent with existing DNN testing work like \deepgini.

\end{abstract}

\begin{CCSXML}
<ccs2012>
   <concept>
       <concept_id>10002978.10003022</concept_id>
       <concept_desc>Security and privacy~Software and application security</concept_desc>
       <concept_significance>500</concept_significance>
       </concept>
   <concept>
       <concept_id>10010147.10010178</concept_id>
       <concept_desc>Computing methodologies~Artificial intelligence</concept_desc>
       <concept_significance>500</concept_significance>
       </concept>
 </ccs2012>
\end{CCSXML}

\ccsdesc[500]{Security and privacy~Software and application security}
\ccsdesc[500]{Computing methodologies~Artificial intelligence}

\keywords{neural networks, PAC-model robustness, model learning, scenario optimization}

\maketitle

\section{Introduction}
\label{sec:introduction}

Deep neural networks (DNNs) are now widely deployed in many applications such as image classification,  game playing, and the recent scientific discovery on predictions of protein structure~\cite{alphafold}. Adversarial robustness of a DNN plays the critical role for its trustworthy use. This is especially true for for safety-critical applications such as self-driving cars~\cite{selfdriving}. Studies have shown that even for a DNN with high accuracy, it can be fooled easily by carefully crafted \emph{adversarial inputs} \cite{SZSBEGF2014}.
This motivates research on verifying DNN robustness properties, i.e., the prediction of the DNN remains the same after bounded perturbation on an input.
As the certifiable criterion before deploying a DNN, the robustness radius should be estimated or the robustness property should be verified. 



In this paper, we propose a practical framework for analysing robustness of DNNs.
The main idea is to learn an affine model which abstracts local behaviour of a DNN and use the learned model (instead of the original DNN model) for robustness analysis. Different from model abstraction methods like \cite{abstractioncav20,atva2020}, our learned model is not a strictly sound over-approximation, but it varies from the DNN uniformly within a given margin subject to some specified significance level and error rate. We call such a model the probably approximately correct (PAC) model.

There are several different approaches to estimating the maximum robustness radius of a given input for the DNN, including formal verification, statistical analysis, and adversarial attack. In the following, we will first briefly explain the pros and cons of each approach for and its relation with our method. Then, we will highlight the main contributions in this paper. 

\paragraph{Bound via formal verification is often too conservative} A DNN is a complex nonlinear function and formal verification tools \cite{marabou,deepz,deeppoly,cnncert,NNV,prodeep,deepsrgrextended} can typically handle DNNs with hundreds to thousands of neurons. This is dwarfed by the size of modern DNNs used in the real world, 
such as the ResNet50 model~\cite{resnet} used in our experiment with almost $37$ million hidden neurons. 
The advantage of formal verification is that its resulting robustness bound is guaranteed, but the bound is also often too conservative. For example, the state-of-the-art formal verification tool \eran is based on abstract interpretation \cite{deeppoly} that over-approximates the computation in a DNN using computationally more efficient abstract domains. If the \eran verification succeeds, one can conclude that the network is locally robust; otherwise, due to its over-approximation, no conclusive result can be reached and the robustness property may or may not hold.

\commentout{
In the formal verification community, abstraction-based techniques have been successfully applied to non-trivial networks  to check robustness properties. The idea is to encode computations of neurons as constraints, and check the robustness property with SMT-based solvers, abstract interpretation, or counterexample guided abstraction refinement. If the verification result is affirmative, one can conclude that the network is locally robust. Due to over-approximations in the abstraction-based approach, the result may be inconclusive otherwise.
Although model abstraction of DNNs proposed recently in \cite{abstractioncav20,atva2020} can verify relatively large networks, in which the abstraction models have a strict soundness guarantee, these abstraction-based methods are still limited in industrial scale applications, where neural networks may be characterised by millions or even billions of parameters.
}

\paragraph{Estimation via statistical methods is often too large}
If we weaken the robustness condition by allowing a small error rate on the robustness property, it becomes a probabilistic robustness (or quantitative robustness) property.
Probabilistic robustness characterises the local robustness in a way similar to the idea of the label change rate in mutation testing for DNNs ~\cite{detection1,detection2}. 
In \cite{statistical19,DBLP:conf/icml/WengCNSBOD19,baluta2021scalable,quanti4,quanti5,quanti6,BayesianIJCAI}, statistical methods are proposed to evaluate local robustness with a probably approximately correct (PAC) guarantee. That is, with a given confidence, the DNN satisfies a probabilistic robustness property, and we call this \emph{PAC robustness}. However, as we are going to see in the experiments (Section \ref{sec:experiment}), the PAC robustness estimation via existing statistical methods is often unnecessarily large. In this work, our method significantly improves the PAC robustness bound, without loss of confidence or error rate.

\paragraph{Bound via adversarial attack has no guarantee}
Adversarial attack algorithms apply various search heuristics based on e.g., gradient descent or evolutionary techniques for generating adversarial inputs~\cite{CW-Attacks,pgd,GenAttack,zhang2020walking}. These methods may be able to find adversarial inputs efficiently, but are not able to provide any soundness guarantee. While the adversarial inputs found by the attack establish an upper bound of the DNN local robustness, it is not known whether there are other adversarial inputs within the bound. Later, we will use this upper bound obtained by adversarial attack, together with the lower bound proved by the formal verification approach discussed above, as the reference for evaluating the quality of our PAC-model robustness results, and comparing them with the latest statistical method.


\paragraph{\textbf{Contributions}} 
We propose a novel framework of PAC-model robustness verification for DNNs. 
Inspired by the scenario optimisation technique in robust control design, we give an algorithm to learn an affine PAC model for a DNN.
This affine PAC model captures local behaviour of the original DNN. It is simple enough for efficient robustness analysis, and its PAC guarantee ensures the accuracy of the analysis. 
We implement our algorithm in a prototype called \deeppac.  
We extensively evaluate \deeppac with 20 DNNs on three datasets. \deeppac outperforms the state-of-the-art statistical tool \provero with less running time, fewer samples and, more importantly, much higher precision. \deeppac can assess the DNN robustness faithfully when the formal verification and existing statistical methods fail to generate meaningful results.

\commentout{
 \begin{itemize}
 \item {\bf Black-box.} 
 In many application scenarios, the DNN is a black box in nature due to either a lack of an adequate knowledge on its specific structure and parameter or its high complexity and large scale. 
 Existing white-box methods fail to work with these black-box DNNs. However, a black-box method, which only uses inputs and outputs of DNNs, works. 
 \item {\bf Model learning.} Sampling can help gain knowledge of a black-box system, but the knowledge gained only from samples is far from inferring a robustness property of a black-box system well, especially when the dimensionality of the input is large.
To learn an abstraction model facilitates the fight against this issue. The essential difference between our model-driven method and previous sampling-based methods like \cite{statistical19,DBLP:conf/icml/WengCNSBOD19,DBLP:journals/corr/abs-2002-06864,anderson2020certifying,anderson2020datadriven} is that we learn an abstraction with an error bound under a PAC guarantee, which captures local behaviour of the original DNN and thus can provide more insights into the DNNs and facilitates more accurate analysis of its robustness.
 \end{itemize}

The method of learning an DNN is inspired by the scenario optimisation technique in robust control design, which has been recently applied in PAC learning for safety verification of continuous-time black-box dynamical systems~\cite{xuebaipaper}. 
Scenario optimisation considers only a sample of constraints to provide a solution for robust optimisation problems.
The learned model can be used to offer a PAC guarantee for the robustness property. In this work, we focus on the investigation of widely studied $L_\infty$ robustness based on scenario optimisation, but the proposed method can also be extended to other robustness.
Moreover, the counterexamples of the learned model can potentially attack the original DNN. Exploiting the linearity of the learned model, we can directly compute a potential counterexample, which will be further identified as real or spurious by running the original DNN.
}

\commentout{
We have implemented our algorithm in a prototypical tool DeepPAC and tested it on a number of benchmarks. Experimental results demonstrate that our method is promising. For the MNIST dataset, compared to white-box tools, DeepPAC can verify larger robustness radii with very high confidence and low error rate; for the CIFAR-10 dataset, DeepPAC can handle ResNet networks with more than $6.5$~million neurons, which cannot be processed by existing sound verification tools. For counterexample generation, the visualisation of the coefficients of the learned models and the counterexamples captures the essential differences between the original and the attacking images and reflects the direction of our attacks on the level of pixels. We observe that DeepPAC can generate adversarial examples very close to the decision boundary. Our method is competitive to existing black-box and white-box attacking methods.

}

\paragraph{Organisation of the paper} The rest of this paper is organized as follows. In Sect.~\ref{sec:preliminaries}, we first introduce the background knowledge. We then formalize the novel concept PAC-model robustness in Sect.~\ref{sec:pac-model-robustness}.  The methodology is detailed in Sect.~\ref{sec:abstraction}. Extensive experiments have been conducted in Sect.~\ref{sec:experiment} for evaluating \deeppac. We discuss related work in Sect.~\ref{sec:related_work} and conclude our work in Sect.~\ref{sec:conclusion}.

\section{Preliminary} \label{sec:preliminaries}
In this section, we first recall the background knowledge on the DNN and its local robustness properties.
Then, we introduce the scenario optimization method that will be used later.
In this following context, we denote $x_i$ as the $i$th entry of a vector $\boldsymbol{x} \in \mathbb R^m$. For $\bm{x} \in \mathbb R^m$ and $\lambda \in \mathbb R$, we define $\bm{x}+\lambda$ as $(x_0+\lambda,\ldots,x_m+\lambda)^\top$. Given $\bm{x},\bm y \in \mathbb R^m$, we write $\bm{x}\leq \bm{y}$ if $x_i\leq y_i$ for $i = 1,\ldots,m$. We use $\bm{0}$ to denote the zero vector. 
For $\boldsymbol{x} \in \mathbb{R}^m$, its $L^\infty$-norm is defined as $\|\boldsymbol{x}\|_\infty:=\max_{1 \le i \le m}|x_i|$.
We use the notation $B({\hat{\bm{x}}},r):=\{\bm{x} \in \mathbb{R}^m \mid \|\boldsymbol{x}-{\hat{\bm{x}}}\|_\infty\leq r\}$ to represent the closed $L^\infty$-norm ball with the center ${\hat{\bm{x}}} \in \mathbb{R}^m$ and radius $r>0$. 

\subsection{DNNs and Local Robustness}
\label{sec:dnn}
A deep neural network can be characterized as a function $\bm{f}:\mathbb{R}^m\to\mathbb{R}^n$ with $\bm{f}=(f_1,\ldots,f_n)^{\top}$, where $f_i$ denotes the function corresponding to the $i$th output. 
For classification tasks, a DNN labels an input $\bm{x}$ with the output dimension having the largest score, denoted by 
$C_{\bm{f}}(\boldsymbol{x}):=\arg\max_{1\leq i\leq n}
f_i(\bm{x})$.
A DNN is composed by multiple layers: the input layer, followed by several hidden layers and an output layer in the end. 
A hidden layer applies an affine function or a non-linear activation function on the output of previous layers. The function $\bm{f}$ is the composition of the transformations between layers.  

\begin{example}
\label{example:dnn}
We illustrate a fully connected neural network (FNN), where each node (i.e., neuron) is connected with the nodes from the previous layer. Each neuron has a value that is calculated as the weighted sum of the neuron values in the previous layer, plus a bias. For a hidden neuron, this value is often followed by an activation function e.g., a ReLU function that rectifies any negative value into $0$.
In Fig.~\ref{fig:smallnet}, the FNN characterizes a function
  $\bm{f}:\mathbb{R}^2\rightarrow\mathbb{R}^2$.
  The weight and bias parameters are highlighted on the edges and the nodes respectively.
  For an input $\bm{x}=(x_1,x_2)^{\top}\in[-1,1]^2$, we have $\bm{f}(\x)=(f_1(\x),f_2(\x))^{\top}$.

\begin{figure}[h]
    \centering
    \begin{tikzpicture}[shorten >=1pt,->,draw=black, node distance=\layersep]
        \tikzstyle{every pin edge}=[<-,shorten <=1pt]
        \tikzstyle{neuron}=[circle,fill=black,scale=0.6]
        \tikzstyle{input neuron}=[neuron]
        \tikzstyle{output neuron}=[neuron]
        \tikzstyle{hidden neuron}=[neuron]
        \tikzstyle{annot} = [text width=4em, text centered, font=\fontsize{8}{6}]
    
        \foreach \name / \y in {1,...,2}
            \node[input neuron, pin=left:Input \y] (I-\name) at (0,-\y*1.5) {};
    
        \foreach \name / \y in {1,...,2}
            \path[yshift=0cm]
                node[hidden neuron] (H-\name) at (\layersep,-\y*1.5) {};
    
        \node[output neuron,pin={[pin edge={->}]right:Output 1}] (O1) at (\layersep*2,-1.5) {};
        \node[output neuron,pin={[pin edge={->}]right:Output 2}] (O2) at (\layersep*2,-3) {};
    
        \path (I-1) edge[above left,] node{$3$} (H-1);
        \path (I-1) edge[above left,sloped] node{$5$} (H-2);
        
        \path (I-2) edge[below left,sloped] node{$-10$} (H-1);
        \path (I-2) edge[below left] node{$-4$} (H-2);

        \path (H-1) edge[above left,sloped] node{$3$} (O1);
        \path (H-2) edge[below left,sloped] node{$1$} (O1);
        
        \path (H-1) edge[above left,sloped] node{$9$} (O2);
        \path (H-2) edge[below left,sloped] node{$7$} (O2);
    
        \node[annot,above of=I-1,color=black,node distance=0.5cm]{$[-1,1]$};
        \node[annot,above of=H-1,color=black,node distance=0.5cm] (H-1n){$-9$};
        \node[annot,above of=O1,color=black,node distance=0.5cm] (On){$14$};
        
        \node[annot,below of=I-2,color=black,node distance=0.5cm]{$[-1,1]$};
        \node[annot,below of=H-2,color=black,node distance=0.5cm] (H-2n){$-10$};
        \node[annot,below of=O2,color=black,node distance=0.5cm] (On){$-10$};
        
    \end{tikzpicture}
    \caption{An FNN with two input neurons, two hidden neurons and two output neurons.} \label{fig:ex_fnn}
    \label{fig:smallnet}
\end{figure}
\end{example}



For a certain class label $\ell$,
we define the \emph{targeted score difference function} $\bm{\Delta}$
as
\begin{equation} \label{eq:difference}
\bm{\Delta} (\bm{x}) = (f_1(\bm{x}) - f_\ell(\bm{x}), \ldots, f_n(\bm{x}) - f_\ell(\bm{x}))^{\top}\;.
\end{equation}
Straightforwardly, this function measures the difference between the score of the targeted label and other labels.
For simplicity, we ignore the entry $f_\ell(\bm{x}) - f_\ell(\bm{x})$ and regard the score difference function $\bm{\Delta}$ as a function from $\mathbb R^m$ to $\mathbb R^{n-1}$.
For any inputs $\hat x$ with the class label $\ell$, it is clear that $\bm{\Delta} (\hat \x) < \bm{0}$ if the classification is correct.
For simplicity, when considering an $L^\infty$-norm ball
with the center $\hat \x$, we denote by $\bm{\Delta}$
the difference score function with respect to the
label of $\hat \x$.
Then robustness property of a DNN can therefore be defined as below.


\begin{definition}[DNN robustness]
\label{def:localrobustness}
Given a DNN $\bm{f}:\mathbb{R}^m \to \mathbb{R}^n$, an input ${\hat{\bm{x}}}\in \mathbb{R}^m$, and $r>0$, we say that $\bm{f}$ is (locally) \emph{robust} 
in $B({\hat{\bm{x}}},r)$
if for all $\boldsymbol{x} \in B({\hat{\bm{x}}},r)$, we have
$\bm{\Delta}(\x) <\bm{0}$.
\end{definition}
Intuitively, local robustness ensures the consistency of the behaviour of a given input under certain perturbations. 
An input $\x' \in B(\hat\x,r)$ that destroys the robustness (i.e. $\bm{\Delta}(\x')\ge \bm{0}$) is called an \emph{adversarial example}. 
Note that this property is very strict so that
the corresponding verification problem 
is NP-complete, 
and the exact maximum robustness radius
cannot be computed efficiently except for very small DNNs.
Even estimating a relatively accurate lower bound is difficult and existing sound methods cannot scale to the state-of-the-art DNNs.
In order to perform more practical DNN robustness analysis, the property is relaxed by allowing some errors in the sense of probability.
Below we recall the definition of \emph{PAC robustness}~\cite{baluta2021scalable}.



\begin{definition}[PAC robustness]
\label{def:PACrobustness}
Given a DNN $\bm{f}:\mathbb{R}^m \to \mathbb{R}^n$, an $L_\infty$-norm ball $B(\hat{\bm x},r)$, a probability measure $\mathbb P$ on $B(\hat{\bm x},r)$, a significance level $\eta$, and an error rate $\epsilon$, the DNN $\bm{f}$ is $(\eta,\epsilon)$-PAC robust 
in $B({\hat{\bm{x}}},r)$
if
\begin{equation}
 \mathbb{P}( \bm{\Delta}(\bm{x}) < 0) \geq 1-\epsilon
\end{equation}
with confidence $1-\eta$.
\end{definition}


PAC robustness is an statistical relaxation and extension of DNN robustness in Def.~\ref{def:localrobustness}. It essentially only focuses on the input samples, but mostly ignores the behavioral nature of the original model. When the input space is of high dimension, the boundaries between benign inputs and adversarial inputs will be extremely complex and the required sampling effort will be also challenging. Thus, an accurate estimation of PAC robustness is far from trivial. This motivates us to innovate the PAC robustness with PAC-model robustness in this paper (Sect.~\ref{sec:pac-model-robustness}).

\commentout{
the number of adversarial examples that can be tolerated is very large.
We can foresee that using this definition to analyze the robustness of neural networks, especially to estimate the maximum robustness radius, is very inaccurate.
The reason for this situation is that the definition of PAC robustness essentially only focuses on the sample, but completely ignores the behavioral nature of the original model.
}

\subsection{Scenario Optimization}
Scenario optimization is another motivation for \deeppac. It has been successfully used in robust control design for solving a class of optimization problems in a statistical sense, by only considering a randomly sampled finite subset of infinitely many convex constraints \cite{remove_unique,scenario_theorem}.

Let us consider the following optimization problem: 
\begin{equation}
    \label{robustopt}
    \begin{split}
        & \min\limits_{\boldsymbol{\gamma}\in \Gamma \subseteq \mathbb{R}^m}\boldsymbol{b}^\top\boldsymbol{\gamma}\\
        s.t.\;& f_{\boldsymbol{\omega}}(\boldsymbol{\gamma})\leq 0,\;\forall \boldsymbol{\omega}\in \Omega ,
    \end{split}
\end{equation}
where $f_{\boldsymbol{\omega}}$ is a convex and continuous function of the $m$-dimensional optimization variable $\boldsymbol{\gamma}$ for every $\boldsymbol{\omega}\in \Omega$, and both $\Omega$ and $\Gamma$ are convex and closed.
In this work, we also assume that $\Omega$ is bounded.
In principle, it is challenging to solve \eqref{robustopt}, as there are infinitely many constraints.
Calafiore et al.~\cite{remove_unique} proposed the following scenario approach to solve \eqref{robustopt} with a PAC guarantee.

\begin{definition}
    \label{scenariodef}
   Let $\mathbb P$ be a probability measure on $\Omega$. The scenario approach to handle the optimization problem~\eqref{robustopt} is to solve the following problem. We extract $K$ independent and identically distributed (i.i.d.) samples $(\boldsymbol{\omega}_i)_{i=1}^K$ from $\Omega$ according to the probability measure $\mathbb P$:
    \begin{equation}
    \label{scenarioopt}
    \begin{split}
        & \min\limits_{\boldsymbol{\gamma}\in \Gamma \subseteq \mathbb{R}^m}\boldsymbol{b}^\top\boldsymbol{\gamma}\\
        \mathrm{s.t.}\;&\bigwedge_{i=1}^{K} f_{\boldsymbol{\omega}_i}(\boldsymbol{\gamma})\leq 0.
    \end{split}
\end{equation}
\end{definition}
The scenario approach relaxes the infinitely many constraints in \eqref{robustopt} by only considering a finite subset containing $K$ constraints. In \cite{remove_unique}, a PAC guarantee, depending on $K$,
between the scenario solution in \eqref{scenarioopt} and its original optimization in \eqref{robustopt} is proved.
This is further improved by 
\cite{scenario_theorem} in reducing the number of samples $K$.
Specifically, the following theorem establishes a condition on $K$ for \eqref{scenarioopt} which assures that its solution satisfies the constraints in \eqref{robustopt} statistically.

\begin{theorem}[\cite{scenario_theorem}]\label{scenariotheorem}
    If (\ref{scenarioopt}) is feasible and has a unique optimal solution $\boldsymbol{\gamma}^*_K$, and
    \begin{equation}
        \label{eq:scenariotheorem}
        \epsilon\geq \frac{2}{K}(\ln\frac{1}{\eta}+m),
    \end{equation}
    where $\epsilon$ and $\eta$ are the pre-defined error rate and the significance level, respectively, then with confidence at least $1-\eta$, the optimal $\boldsymbol{\gamma}^*_K$ satisfies all the constraints in $\Omega$ but only at most a fraction of probability measure $\epsilon$, i.e., $\mathbb P (f_{\boldsymbol{\omega}}(\boldsymbol{\gamma}_K^*)> 0)\leq \epsilon$.
\end{theorem}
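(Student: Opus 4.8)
The plan is to control the randomness introduced by the $K$ sampled constraints and to show that, except on an event of probability at most $\eta$ under the product measure $\mathbb{P}^K$, the optimiser $\boldsymbol{\gamma}^*_K$ violates only a small fraction of the remaining constraints. First I would define the \emph{violation probability} of any candidate $\boldsymbol{\gamma}\in\Gamma$ by $V(\boldsymbol{\gamma}):=\mathbb{P}(f_{\boldsymbol{\omega}}(\boldsymbol{\gamma})>0)$, so that the conclusion is exactly the statement $\mathbb{P}^K\big(V(\boldsymbol{\gamma}^*_K)>\epsilon\big)\leq\eta$. The entire argument is then a tail estimate for this event.

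The structural heart of the proof is the notion of a \emph{support constraint}: a sampled index $i$ whose removal from \eqref{scenarioopt} strictly decreases the optimal objective value (equivalently, moves the optimiser). The key geometric lemma, and the place where convexity and the uniqueness hypothesis enter, is a Helly-type bound asserting that a convex program over $\boldsymbol{\gamma}\in\Gamma\subseteq\mathbb{R}^m$ with a unique optimal solution has at most $m$ support constraints. Uniqueness is essential: it excludes degenerate configurations in which more than $m$ constraints are jointly active and collectively pin down the optimum, and it makes the support set well defined. I would establish this lemma by a Helly-type exchange argument: if strictly more than $m$ constraints were each individually indispensable to the optimum, a dimension count would let one be dropped while still recovering the same optimiser, contradicting the definition of a support constraint.

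With the support bound in hand, the next step is a combinatorial counting argument that exploits the i.i.d., hence exchangeable, structure of $(\boldsymbol{\omega}_i)_{i=1}^K$. The idea is to condition on \emph{which} subset $I$ of the $K$ indices, of size at most $m$, carries the support constraints; by the support lemma, $\boldsymbol{\gamma}^*_K$ is then determined by the constraints indexed by $I$ alone. For a fixed $I$, the remaining $K-m$ samples are independent of $\boldsymbol{\gamma}^*_K$, so the probability that all of them are satisfied while the true violation nonetheless exceeds $\epsilon$ is at most $(1-\epsilon)^{K-m}$. Summing over the $\binom{K}{m}$ choices of $I$ (and, in the sharp version, over all support-set sizes) yields the binomial-tail estimate $\mathbb{P}^K\big(V(\boldsymbol{\gamma}^*_K)>\epsilon\big)\leq\sum_{j=0}^{m-1}\binom{K}{j}\epsilon^j(1-\epsilon)^{K-j}$.

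The final step is purely analytic: bound this tail by $\eta$ under the hypothesis $\epsilon\geq\frac{2}{K}(\ln\frac{1}{\eta}+m)$. Using $\binom{K}{j}\epsilon^j\leq(eK\epsilon/j)^j$ together with $(1-\epsilon)^{K-j}\leq e^{-\epsilon(K-j)}$, one reduces the whole sum to an elementary exponential estimate in which the factor $2$ in the hypothesis supplies exactly the slack needed to absorb the polynomial prefactors and drive the bound below $\eta$. I expect the main obstacle to be the combinatorial conditioning step: making the informal recipe ``condition on the support set and treat the discarded samples as fresh draws'' fully rigorous is delicate, because the event ``$I$ is the support set'' is itself data-dependent, so the set-up must be arranged so that genuine independence between $\boldsymbol{\gamma}^*_K$ and the discarded samples is available. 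This is precisely where exchangeability and the uniqueness of the optimiser must be combined correctly; by comparison the Helly-type support bound is standard (though subtle under degeneracy), and the closing analytic estimate is routine once the tail bound is in place.
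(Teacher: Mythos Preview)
The paper does not prove this theorem at all: it is quoted verbatim from the cited reference \cite{scenario_theorem} as a known result in scenario optimisation, and the paper only \emph{uses} it (to size the sample count $K$ in \eqref{eq:linearP} and in the component-based and focused learning variants). So there is no ``paper's own proof'' to compare your proposal against.

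That said, your outline is essentially the standard argument from the scenario-optimisation literature. The support-constraint lemma (at most $m$ support constraints for a convex program in $\mathbb{R}^m$ with a unique optimiser) and the subsequent exchangeability/counting argument are exactly the ingredients of the Calafiore--Campi and Campi--Garatti proofs. Two small remarks. First, the simple union bound you describe --- fix a size-$m$ support set $I$, use independence of the remaining $K-m$ samples, then sum over $\binom{K}{m}$ choices --- yields $\binom{K}{m}(1-\epsilon)^{K-m}$, not the sharper binomial tail $\sum_{j=0}^{m-1}\binom{K}{j}\epsilon^j(1-\epsilon)^{K-j}$; the latter requires the more delicate argument in \cite{scenario_theorem} that you allude to but do not spell out. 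Either bound, however, is enough to recover the explicit sample condition \eqref{eq:scenariotheorem} after the elementary exponential estimate you sketch. Second, you correctly flag the conditioning step as the genuinely subtle point: the event ``$I$ is the support set'' depends on all $K$ samples, so one cannot literally condition on it and then treat the discarded samples as fresh; the rigorous version goes through exchangeability and a careful partition of the sample space, exactly as in the cited reference.
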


In this work, we set $\mathbb P$ to be 
the uniform distribution on the $\Omega$ set in \eqref{robustopt}. It is worthy mentioning that Theorem~\ref{scenariotheorem} still holds even if the uniqueness of the optimal $\boldsymbol{\gamma}^*_K$ is not required, since a unique optimal solution can always be obtained by using the Tie-break rule~\cite{remove_unique} if multiple optimal solutions exist. 

 The scenario optimization technique has been exploited in the context of black-box verification for continuous-time dynamical systems in \cite{xuebaipaper}. We will propose an approach based on scenario optimization to verify PAC-model robustness in this paper.

\section{PAC-Model Robustness}
\label{sec:pac-model-robustness}
The formalisation of the novel concept PAC-model robustness is our first contribution in this work and it is the basis for developing our method.  We start from defining a \emph{PAC model}. Let $\mathcal F$ be a given set of high dimensional real functions (like affine functions).

\begin{definition}[PAC model]\label{def:PACmodel}
Let $\bm{g}:\R^m\to \R^{n}$, $B \subseteq \mathbb R^m$ and $\mathbb P$ a probability measure on $B$. 
Let $\eta,\epsilon\in (0,1]$ be the given error rate and significance level, respectively.
Let $\lambda\ge 0$ be the margin.
A function $\widetilde{\bm{g}}:B \to \R^{n} \in \mathcal F$ is a PAC model of $\bm{g}$ on $B$ w.r.t. $\eta$, $\epsilon$ and $\lambda$, denoted by $\widetilde{\bm{g}} \approx_{\eta,\epsilon,\lambda} \bm{g}$, if
\begin{equation}\label{eq:abstConf}
    \mathbb P( || \widetilde{\bm{g}}(\bm{x})-\bm{g}(\bm{x})||_\infty \le \lambda ) \ge 1-\epsilon ,
\end{equation}
with confidence $1-\eta$. 
\end{definition}
In Def.~\ref{def:PACmodel}, we define a PAC model $\widetilde{\bm{g}}$ as an approximation of the original model $\bm{g}$  with two parameters $\eta$
and $\epsilon$ which bound the maximal significance level and
the maximal error rate for the PAC model, respectively.
Meanwhile, there is another parameter $\lambda$
that bounds the margin between the PAC model and the original model. Intuitively, the difference between a PAC model and the original one is bounded under the given error rate $\epsilon$ and significance level $\eta$. 

For a DNN $\bm f$, if its PAC model $\widetilde{\bm f}$ with the corresponding margin is robust, then $\bm f$ is PAC-model robust. Formally, we have the following definition.
\begin{definition}[PAC-model robustness]
\label{def:pacmodelrobustness}
Let $\bm{f}:\mathbb{R}^m \to \mathbb{R}^n$ be a DNN and $\bm{\Delta}$ the corresponding score difference. Let $\eta,\epsilon\in (0,1]$ be the given error rate and significance level, respectively.
The DNN $\bm{f}$ is $(\eta,\epsilon)$-PAC-model robust in $B({\hat{\bm{x}}},r)$,
if there exists a PAC model
$\widetilde{\bm{\Delta}} 
\approx_{\eta,\epsilon,\lambda} 
{\bm{\Delta}}$
such that for all $\boldsymbol{x}\in B(\hat{\bm{x}},r)$,
\begin{equation}
    \label{eq:pacmodelrobustness}
    \widetilde{\bm{\Delta}}(\bm{x})+\lambda < \bm{0}.
\end{equation}
\end{definition}
We remind that $\bm{\Delta}$ is the score difference function 
measuring the difference between the score of the targeted label and other labels. A locally robust DNN requires that $\bm{\Delta(x)}<\bm{0}$, and a PAC-model robust DNN requires the PAC upper bound of $\bm{\Delta}$, i.e. $\widetilde{\bm{\Delta}}(\bm{x})+\lambda$, is always smaller than $\bm{0}$.


In Fig.~\ref{fig:property}, we illustrate the property space of PAC-model robustness, by using the parameters $\eta$, $\epsilon$ and $\lambda$. The properties on the $\lambda$-axis are
exactly the strict robustness
since $\Delta(\bm{x})$ is now strictly
upper-bounded by $\widetilde\Delta(\bm{x})+\lambda$.
Intuitively, for fixed $\eta$ and $\epsilon$, a smaller margin $\lambda$ implies a better PAC approximation $\widetilde\Delta(\bm{x})$ of 
 the original one $\Delta(\bm{x})$ and
indicates that the PAC-model robustness is closer to the (strict) robustness property of the original model.
To estimate the maximum robustness radius more accurately, we intend to compute a PAC model with the margin $\lambda$ as small as possible. 
Moreover, the proposed PAC-model robustness is stronger than PAC robustness, which is proved by the following proposition.  
 \begin{proposition} \label{prop:PACrobustness}
If a DNN $\bm f$ is $(\eta,\epsilon)$-PAC-model robust in $B({\hat{\bm{x}}},r)$, then it is $(\eta,\epsilon)$-PAC robust in $B({\hat{\bm{x}}},r)$.
 \end{proposition}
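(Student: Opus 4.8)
The plan is to unfold the definition of PAC-model robustness and observe that the event on which the learned affine model faithfully approximates $\bm{\Delta}$ is contained in the event on which $\bm{\Delta}$ itself is negative; monotonicity of the probability measure then transfers the $1-\epsilon$ lower bound, and the confidence $1-\eta$ passes through unchanged. Concretely, I would first invoke Definition~\ref{def:pacmodelrobustness}: since $\bm{f}$ is $(\eta,\epsilon)$-PAC-model robust in $B(\hat{\bm{x}},r)$, there is a PAC model $\widetilde{\bm{\Delta}} \approx_{\eta,\epsilon,\lambda} \bm{\Delta}$ with $\widetilde{\bm{\Delta}}(\bm{x})+\lambda < \bm{0}$ for every $\bm{x}\in B(\hat{\bm{x}},r)$. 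By Definition~\ref{def:PACmodel}, the approximation guarantee states that, with confidence $1-\eta$,
\[
\mathbb{P}\bigl( \|\widetilde{\bm{\Delta}}(\bm{x})-\bm{\Delta}(\bm{x})\|_\infty \le \lambda \bigr) \ge 1-\epsilon .
\]

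The crux is a purely deterministic, pointwise implication. Fix any $\bm{x}$ in the ``good'' event $\|\widetilde{\bm{\Delta}}(\bm{x})-\bm{\Delta}(\bm{x})\|_\infty \le \lambda$. Expanding the $L^\infty$-norm coordinatewise gives $\Delta_i(\bm{x}) \le \widetilde{\Delta}_i(\bm{x})+\lambda$ for each output index $i$, and combining this with the robustness condition $\widetilde{\Delta}_i(\bm{x})+\lambda < 0$ yields $\Delta_i(\bm{x})<0$ for all $i$, i.e.\ $\bm{\Delta}(\bm{x})<\bm{0}$. Thus $\{\,\|\widetilde{\bm{\Delta}}-\bm{\Delta}\|_\infty \le \lambda\,\} \subseteq \{\,\bm{\Delta}<\bm{0}\,\}$ as subsets of $B(\hat{\bm{x}},r)$. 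Applying monotonicity of $\mathbb{P}$ to this inclusion and chaining with the bound above gives $\mathbb{P}(\bm{\Delta}(\bm{x})<\bm{0}) \ge \mathbb{P}(\|\widetilde{\bm{\Delta}}(\bm{x})-\bm{\Delta}(\bm{x})\|_\infty \le \lambda) \ge 1-\epsilon$, which is precisely the defining inequality of $(\eta,\epsilon)$-PAC robustness in Definition~\ref{def:PACrobustness}.

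I do not expect a substantive obstacle; the argument rests only on the margin-induced bound $\Delta_i \le \widetilde{\Delta}_i+\lambda$ and on monotonicity of the measure. The one point that needs care is the bookkeeping between the two distinct layers of uncertainty: the error rate $\epsilon$, which quantifies the $\mathbb{P}$-measure of bad inputs inside $B(\hat{\bm{x}},r)$, versus the confidence $\eta$, which quantifies the randomness of the sampling that produced $\widetilde{\bm{\Delta}}$. Because the event inclusion is deterministic, the entire chain of inequalities holds on exactly the same $1-\eta$ confidence event furnished by the PAC model, so the confidence level is untouched and carries over verbatim.
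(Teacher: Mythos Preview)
Your proposal is correct and follows essentially the same approach as the paper: both arguments use the deterministic inclusion $\{\|\widetilde{\bm{\Delta}}-\bm{\Delta}\|_\infty \le \lambda\} \subseteq \{\bm{\Delta}<\bm{0}\}$ (which holds because $\widetilde{\bm{\Delta}}+\lambda<\bm{0}$ everywhere) together with monotonicity of $\mathbb{P}$, and then observe that the confidence $1-\eta$ is inherited directly from the PAC-model guarantee. Your write-up is simply a more explicit unpacking of the paper's one-line chain of inequalities.
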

 \begin{proof}
With confidence $1-\eta$ we have 
    \[
    \begin{aligned}
\qquad \mathbb{P}(\bm{\Delta}(\bm{x}) & \leq \bm{0})
    \geq \mathbb{P}(\bm{\Delta}(\bm{x}) 
    \leq \bm{\widetilde\Delta}(\bm{x})+\lambda)\\
    & \geq\mathbb{P}(||\bm{\widetilde\Delta}(\bm{x})-\bm{\Delta}(\bm{x})||_\infty\leq \lambda)
    \geq1-\epsilon,
        \end{aligned}
\]
which implies that $\bm f$ is $(\eta,\epsilon)$-PAC robust in $B({\hat{\bm{x}}},r)$.
\end{proof}

In this work, wo focus on the following problem:
\begin{quote}
    Given a DNN $\bm f$, an $L_\infty$-norm ball $B(\hat {\bm x},r)$, a significance level $\eta$, and an error rate $\epsilon$, we need to determine whether $\bm f$ is $(\eta,\epsilon)$-PAC-model robust.
\end{quote}
Before introducing our method, we revisit PAC robustness (Def.~\ref{def:PACrobustness}) in our PAC-model robustness theory. 
Statistical methods like \cite{baluta2021scalable} infer PAC robustness from samples and their classification output in the given DNN. 
In our PAC-model robustness framework, these methods simplify the model to a function $\xi:B(\hat {\bm x},r) \to \{0,1\}$, where $0$ refers to the correct classification result and $1$ a wrong one, and  infer the PAC-model robustness with the constant function $\widetilde\xi(\bm x) \equiv 0$ on $B(\hat {\bm x} ,r)$ as the model. 
In \cite{anderson2020certifying}, the model is modified to a constant score difference function $\widetilde{\bm \Delta} \equiv c$.
These models are too weak to describe the behaviour of a DNN well.
It can be predicted that, if we learn a PAC model with an appropriate model, the obtained PAC-model robustness property will be more accurate and practical, and this will be demonstrated in our experiments.


\begin{figure}[t]
    \centering
        \centering
        \includegraphics[width=0.6\linewidth]{}
    \caption{Property space of PAC-model robustness.}
    \label{fig:property}
\end{figure}

\section{Methodology} \label{sec:abstraction}
In this section, we present our method for analysing the PAC-model robustness of DNNs. 
The overall framework is shown in Fig.~\ref{fig:framework}.
In general, our method comprises of three stages:
sampling, learning, and analysing.
\begin{itemize}
    \item[S1:] We sample the input region $B(\hat\x,r)$ and obtain the corresponding values of the score difference function $\bm{\Delta}$.
    \item[S2:] We learn a PAC model $\widetilde{\bm{\Delta}}(x)\approx_{\eta,\epsilon,\lambda}\bm{\Delta}(x)$ of the score difference function from the samples.
    \item[S3:] We analyse whether $\widetilde{\bm{\Delta}}(x)+\lambda$ is always negative in the region $B(\hat\x,r)$ by computing its maximal values.
\end{itemize}

From the description above, 
we see it is a 
black-box method since
we only use the 
samples in the neighbour
and their corresponding outputs
to construct the PAC model.
The number of samples is independent of the structure and the size of original models,
which will bring the good scalability and efficiency.
Moreover, we are essentially reconstructing a proper model to depict the local behavior of the original model.
Compared with the statistical methods,
the PAC model can potentially extract more information from the score differences of these samples, which supports us to obtain more accurate results.

Note that our framework is constructive, and the PAC model and its maximal points in the region will be constructed explicitly during the analysis. Then, we can obtain the maximal values of the PAC model, and infer that the original DNN satisfies the PAC-model robustness when all maximal values are negative. Thus, \deeppac can be considered as a sound approach to verify the PAC-model robustness.

\begin{figure}[ht!]
    \centering
        \centering
        \includegraphics[width=0.98\linewidth]{}
    \caption{Framework of PAC-model robustness analysis base on model learning}
    \label{fig:framework}
\end{figure}

\subsection{Learning a PAC Model} \label{subsec:learn_original}
To obtain a PAC model of the original score
difference function $\bm{\Delta({\x})}$,
we first create a function template, and then determine its parameters by model learning from the samples.
Hereafter, we set $\mathcal F$ to be the set of affine functions, and consider the PAC model $\widetilde{\bm{\Delta}}(\bm{x})$ to be an affine function with bounded coefficients.
A reason for choosing an affine template is that the behaviours of a DNN in a small $L^\infty$-norm ball $B(\hat{\bm{x}},r)$ are very similar to some affine function \cite{lime}, due to the almost everywhere differentiability of DNNs.
In other words, an affine function can approximate the original model well enough in most cases
to maintain the accuracy of our robustness analysis.
Specifically, for the $i$th dimension of the DNN output layer, we set 
$\widetilde\Delta_i(\x)=\c_i^\top \x=c_{i,0}+c_{i,1}x_1+\cdots+c_{i,m}x_m$.
With extracting a set of $K$ independent and identically distributed samples $\hat X \subseteq B(\hat{\bm{x}},r)$, we construct the following optimisation problem for learning the affine PAC model $\widetilde{\bm{\Delta}}(\bm{x})$.
\begin{equation}\label{eq:linearP}
\begin{array}{lll}
                & \min\limits_{\lambda\ge 0} \lambda \\
    \text{s.t.} &  -\lambda \le \c_i^\top \x -\Delta_i(\x) \le \lambda ,& 
     \forall \x \in \hat X,\; i\ne \ell \;,\\
                &   L \le c_{i,k} \le U ,\; & i\ne \ell, 
     k=0,\ldots,m \;.
\end{array}
\end{equation}
In the above formulation of PAC model learning, the problem boils down to a linear programming (LP) optimisation. We reuse $\lambda$ to denote the optimal solution, and $\widetilde\Delta_i$ to be the function whose coefficients $\c_i$ are instantiated according to the optimal solution $\lambda$. 
Specifically, we aim to compute a PAC model $\widetilde{\bm{\Delta}}$ of $\bm{\Delta}$.
By Theorem~\ref{scenariotheorem}, the confidence and the error rate can be ensured by a sufficiently large number of samples.
Namely, to make \eqref{eq:abstConf} hold with confidence $1-\eta$,
we can choose any 
$K\ge \frac{2}{\epsilon}(\ln \frac{1}{\eta}+(m+1)(n-1)+1)$
corresponding to the number of the variables in \eqref{eq:linearP}.

For fixed $\eta$ and $\epsilon$, the number of samples $K$ is in $O(mn)$, so the LP 
problem~\eqref{eq:linearP} contains $O(mn)$ variables and $O(mn^2)$  constraints. Therefore, the computational cost of the above LP-based approach can quickly become prohibitive with increasing the dimension of input and output. 
\begin{example} \label{example:MNISTLP}
For the MNIST dataset there is the input dimension $m=28$$\times$$28=784$ and output dimension  $n=10$. Even for $\eta=0.001,\epsilon=0.4$, 
we need to solve an LP problem with $7,065$ variables and more than $630,000$ constraints, which takes up too much space (memory out with $10$GB memory). 
\end{example}
To further make the PAC model learning scale better with high-dimensional input and output,
we will consider several optimisations to reduce the complexity of the LP problem in Section~\ref{sec:efficiency}.

From the LP formulation in Eq.~\eqref{eq:linearP}, it can be seen that the PAC model learning is based on the sampling set $\hat X$ instead of the norm ball $B(\hat{\bm{x}},r)$. That is, though in this paper, for simplicity, $B(\hat{\bm{x}},r)$ is assumed to be an $L^{\infty}$-norm ball, our method also works with $L^p$-norm robustness with $1 \le p < \infty$.

\subsection{Analysing the PAC Model} 
\label{sec:generation}
We just detailed how to synthesise a PAC model  $\widetilde{\bm{\Delta}}$ of the score difference function $\bm{\Delta}$. 
When the optimisation problem in \eqref{eq:linearP} is solved, 
we obtain the PAC model $\widetilde{\bm{\Delta}}(x)\approx_{\eta,\epsilon,\lambda}\bm{\Delta}(x)$ of the score difference function.
Namely, $\widetilde{\bm{\Delta}}(\x)\pm \lambda$ approximates the upper/lower bound of the score difference function $\bm{\Delta}$ with the PAC guarantee respectively. 
As aforementioned, all maximal values of $\widetilde{\bm{\Delta}}+\lambda$
being negative implies the PAC-model robustness of the original DNN.
According to the monotonicity of affine functions,
it is not hard to compute the maximum point $\breve{\x}^{(i)}$ of $\widetilde{\Delta}_i(\x)$
in the region $B(\hat \x, r)$.
Specifically, for $\widetilde\Delta_i(\x)$ in the form of
$c_0+\sum_{j=1}^m c_jx_j$, we can infer its maximum point  directly as
\[
\breve{\x}^{(i)}_j=\left\{
\begin{aligned}
    &{\hat\x}_j+r,\qquad & {{c}}_j>0,\\
    &{\hat\x}_j-r,\qquad & {{c}}_j\leq 0.
\end{aligned}
\right.
\]
Note that the choice of $\breve{\x}^{(i)}_j$ is arbitrary for the case $c_j=0$.
Here, we choose ${\hat\x}_j-r$ as an instance.
Then let $\breve{\x}$ be the $\breve{\x}^{(i)}$
corresponding to the maximum $\widetilde{{\Delta}}_i(\breve{\x}^{(i)})$,
and the PAC-model robustness of the original DNN immediately follows
if $\widetilde\Delta(\breve{\x}) + \lambda < \bm 0$.
Besides, each $\breve{\x}^{(i)}$ is a potential adversarial example attacking the
original DNN with the classification label $i$, 
which can be further validated by checking the sign of $\Delta_i(\breve{\x}^{(i)})$.

\begin{example}
\commentout{
Now we illustrate the procedure using the fully connected neural network (FNN) as in Fig.~\ref{fig:smallnet}, where there are three layers and each layer has two neurons. The neuron activation value is calculated as the weighted sum of previous layer's neuron activations plus a bias, followed by a ReLU activation function in the hidden layer. The weight and bias parameters are highlighted on the edges and nodes respectively.
It is easy to see that the neural 
network characterises a function 
  $\bm{f}:\mathbb{R}^2\rightarrow\mathbb{R}^2$.
  For an input $\bm{x}=(x_1,x_2)^{\top}\in[-1,1]^2$, we have $\bm{f}(\x)=(f_1(\x),f_2(\x))^{\top}$.
\begin{figure}[h]
    \centering
    \begin{tikzpicture}[shorten >=1pt,->,draw=black, node distance=\layersep]
        \tikzstyle{every pin edge}=[<-,shorten <=1pt]
        \tikzstyle{neuron}=[circle,fill=black,scale=0.6]
        \tikzstyle{input neuron}=[neuron]
        \tikzstyle{output neuron}=[neuron]
        \tikzstyle{hidden neuron}=[neuron]
        \tikzstyle{annot} = [text width=4em, text centered, font=\fontsize{8}{6}]
    
        \foreach \name / \y in {1,...,2}
            \node[input neuron, pin=left:Input \y] (I-\name) at (0,-\y*1.5) {};
    
        \foreach \name / \y in {1,...,2}
            \path[yshift=0cm]
                node[hidden neuron] (H-\name) at (\layersep,-\y*1.5) {};
    
        \node[output neuron,pin={[pin edge={->}]right:Output 1}] (O1) at (\layersep*2,-1.5) {};
        \node[output neuron,pin={[pin edge={->}]right:Output 2}] (O2) at (\layersep*2,-3) {};
    
        \path (I-1) edge[above left,] node{$3$} (H-1);
        \path (I-1) edge[above left,sloped] node{$5$} (H-2);
        
        \path (I-2) edge[below left,sloped] node{$-10$} (H-1);
        \path (I-2) edge[below left] node{$-4$} (H-2);

        \path (H-1) edge[above left,sloped] node{$3$} (O1);
        \path (H-2) edge[below left,sloped] node{$1$} (O1);
        
        \path (H-1) edge[above left,sloped] node{$9$} (O2);
        \path (H-2) edge[below left,sloped] node{$7$} (O2);
    
        \node[annot,above of=I-1,color=black,node distance=0.5cm]{$[-1,1]$};
        \node[annot,above of=H-1,color=black,node distance=0.5cm] (H-1n){$-9$};
        \node[annot,above of=O1,color=black,node distance=0.5cm] (On){$14$};
        
        \node[annot,below of=I-2,color=black,node distance=0.5cm]{$[-1,1]$};
        \node[annot,below of=H-2,color=black,node distance=0.5cm] (H-2n){$-10$};
        \node[annot,below of=O2,color=black,node distance=0.5cm] (On){$-10$};
        
    \end{tikzpicture}
    \caption{An FNN with two input neurons, two hidden neurons and two output neurons.} \label{fig:ex_fnn}
    \label{fig:smallnet}
\end{figure}
}
We consider the neural network in Fig. \ref{fig:smallnet}. Given an input $\hat{\bm x}=(0,0)^{\top}$, the classification label is  $C_{\bm{f}}(\hat{\x})=1$. 
  The network is robust if $f_2(\x) < f_1(\x)$ for $\x\in B(\hat{\bm{x}},1)$, or equivalently, $f_2(\x)-f_1(\x)<0$.
Thus, our goal is to apply
the scenario approach to learn the score difference $\Delta(\x)=f_2(\x)-f_1(\x)$.
In this example, we take the approximating function of the form $\widetilde \Delta(\x)=c_0+c_1 x_1+c_2 x_2$ with constant parameters $c_0,c_1,c_2\in [-100,100]$ to be synthesised. For ease of exposition, we denote $\bm{c}=(c_1,c_2,c_3)^{\top}$.

We attempt to approximate $\Delta(\bm{x})$ by minimising the absolute difference between it and the approximating function $\widetilde{\Delta}(\bm{x})$. This process can be characterised as an optimisation problem:
\begin{equation}
    \label{optlearn}
    \begin{array}{lll}
        &\min\limits_{\bm{c},\lambda}\lambda&\\
       \mathrm{s.t.} \;& |\widetilde\Delta(\boldsymbol{x})-\Delta(\boldsymbol{x})|\leq \lambda,\;& \forall \boldsymbol{x} \in [-1,1]^2\;,\\
        &\bm{c}\in [-100,100]^3,&\\
        &\lambda\in [-100,100]\;.&
    \end{array}
\end{equation}
To apply the scenario approach, we first need to extract a set of $K$ independent and identically distributed samples $\hat{X} \subseteq [-1,1]^2$, and then reduce the optimisation problem \eqref{optlearn} to the linear programming problem
by replacing the quantifier $\forall \boldsymbol{x} \in [-1,1]^2$ with $\forall \bm{x} \in \hat{X}$ in the constraints. Theorem~\ref{scenariotheorem} indicates that at least $\lceil \frac{2}{\epsilon}(\ln \frac{1}{\eta}+4) \rceil$ samples are required to guarantee the error rate within $\epsilon$, i.e. $\mathbb{P}( |\widetilde\Delta(\bm{x})-\Delta(\bm{x})| \le \lambda )\ge 1-\epsilon$, with confidence $1-\eta$.  

Taking the error rate $\epsilon=0.01$ and the confidence $1-\eta=99.9\%$, we need (at least) $K=2182$ samples in $[-1,1]^2$. By solving the resulting linear program again, we obtain $c_0=-22.4051$, $c_1= 2.800$, $c_2= -9.095$, and $\lambda=9.821$. 

\begin{figure}[t]
    \centering
        \centering
        \includegraphics[width=0.6\linewidth]{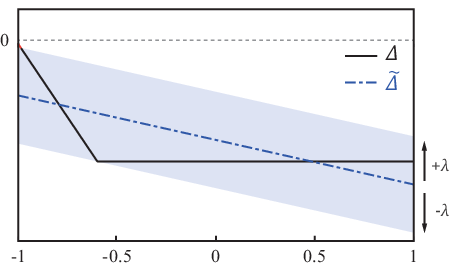}
    \caption{The functions $\Delta$ and $\widetilde\Delta$ in $x_2$ are depicted by fixing $x_1=1$. It is marked red where $\Delta(\bm{x})$ is not bounded by $\widetilde\Delta(\bm{x}) \pm \lambda$.}
    \label{fig:netlearn}
\end{figure}

For illustration, we restrict $x_1=1$, and depict the functions $\Delta$ and $\widetilde\Delta$ in Fig.~\ref{fig:netlearn}.
Our goal is to verify that the first output is always larger than the second, i.e., $\Delta(\bm{x})=f_2(\bm{x})-f_1(\bm{x})<0$. 
As described above, according to the signs of the coefficients of $\widetilde\Delta$, we obtain that $\widetilde\Delta(\bm{x})$ attains the maximum value at $\bm{x}=(1,-1)^\top$ in $[-1,1]^2$. Therefore, the network is PAC-model robustness.

\end{example}

\subsection{Strategies for Practical Analysis}\label{sec:efficiency}
We regard efficiency and scalability as the key factor for achieving  practical analysis of DNN robustness. In the following, we propose three practical  PAC-model robustness analysis techniques.

\subsubsection{Component-based learning} \label{subsec:component_learn}
As stated in Section \ref{subsec:learn_original}, the complexity of solving \eqref{eq:linearP} can still be  high, so we propose component-based learning to reduce the complexity. As before, we use $\widetilde{\Delta}_i$ to approximate $\Delta_i(\x) = f_i(\x)-f_\ell(\x)$ for each $i$ with the same template.  The idea is to learn the functions $\Delta_1,\ldots,\Delta_n$ separately,  and then combine the solutions together. Instead of solving a single large LP problem, we deal with $(n-1)$ individual smaller LP problems, each with $O(m)$ linear constraints. 
 As a result, we have $\widetilde\Delta_i(\x)\approx_{\eta,\epsilon,\lambda_i}\Delta_i(\x)$, from which we can only deduce that
\[
\mathbb{P} \Big(\bigwedge_{i\ne \ell}|\widetilde\Delta_i(\x)-\Delta_i(\x)|\le \lambda_i \Big)\ge 1-(n-1)\epsilon 
\]
with the confidence decreasing to at most  $1-(n-1)\eta$. To guarantee the error rate at least $\epsilon$ and the confidence at least $1-\eta$, we need to recompute the error $\lambda$ between $\widetilde{\bm{\Delta}}(\x)$ and $\bm{\Delta}(\x)$. Specifically, we solve the following optimisation problem constructed by resampling:

\begin{equation}
    \label{dimensionlearn}
    \begin{array}{lll}
                & \min\limits_{\lambda} \lambda & \\
        \text{s.t.} & |\widetilde\Delta_i(\bm{x})-\Delta_i(\bm{x})|\le \lambda ,\;\\
        & \forall \bm{x}\in \hat X\;,i\ne \ell.
    \end{array}
\end{equation}
where $\hat X$ is a set of $K$ i.i.d samples with $K\ge \tfrac{2}{\epsilon}(\ln \tfrac{1}{\eta}+1)$.
Applying  Theorem~\ref{scenariotheorem} again, we have $\widetilde{\bm{\Delta}}(\x)\approx_{\eta,\epsilon,\lambda}\bm{\Delta}(\x)$ as desired.

We have already relaxed the optimisation problem \eqref{eq:linearP} into a family of $(n-1)$ small-scale LP problems. If $n$ is too large (e.g. for Imagenet with $1000$ classes), we can also consider the untargeted score difference function
 $ {\Delta}_{\mathrm u} (\bm x) = \bm f _\ell (\bm x) - \max_{i \ne l} \bm f_i(\bm x)$.
By adopting the untargeted score difference function, the number of the LP problems is reduced to one.
The untargeted score difference function improves the efficiency at expense of the loss of linearity, which harms the accuracy of the affine model.

\commentout{
\begin{example}
Let us reconsider Example~\ref{example:MNISTLP} in Sect.~\ref{subsec:learn_original}. Setting parameters $\eta=0.001,\epsilon=0.1$, a resulting individual LP problem can be solved in $89.4$ seconds with less than $8$GB memory; however, for a smaller error rate such as $\epsilon=0.02$, it takes more than $2$ hours and then raises a solver error. Thus we have to reduce the size of the optimisation problem further.
\end{example}
}

\subsubsection{Focused learning}
\label{subsec:focus_learning}
In this part, our goal is to reduce the complexity further by dividing the learning procedure into two phases with different fineness: i)
 in the first phase, we use a small 
 set of samples to extract coefficients with big absolute values; and ii) these coefficients are ``focused'' in the second phase, in which we use more samples to refine them. In this way, we reduce the number of variables overall, and we call it \emph{focused learning}, which namely refers to focusing the model learning procedure on important features. It is embedded in the component learning procedure.

The main idea of focused learning is depicted below:
\begin{enumerate}
\item \emph{First learning phase:} We extract $K^{(1)}$ i.i.d. samples from the input region $B(\hat{\bm{x}},r)$. We first learn $\Delta_i$ on the $K^{(1)}$ samples. Thus, our LP problems have $O(K^{(1)})$ constraints with $O(m)$ variables. For large datasets like 
ImageNet, the resulting LP problem is still too large. We use efficient learning algorithms such as linear regression (ordinary least squares) to boost the first learning phase on these large datasets.
\item \emph{Key feature extraction:} After solving the LP problem  (or the linear regression for large datasets), we synthesise $\widetilde\Delta_i^{(1)}$ as the approximating function.
Let $\mathit{KeyF}_i(\kappa)\subseteq \{1,x_1,\ldots,x_m\}$ denote the set of extracted key features for the $i$th component corresponding to the $\kappa$ coefficients with the largest absolute values in $\widetilde\Delta_i^{(1)}$. 
\item \emph{Focused learning phase:} We extract $K^{(2)}$ i.i.d. samples from $B(\hat{\bm{x}},r)$. For these samples, we generate constraints only for our key features in $\mathit{KeyF}_i(\kappa)$ by fixing the other coefficients using those in $\widetilde\Delta_i^{(1)}$, and thus the number of the undetermined coefficients is bounded by $\kappa$. By solving an LP problem comprised of these constraints, we finally determine the coefficients of the features in $\mathit{KeyF}_i(\kappa)$.
\end{enumerate}

We can determine the sample size $K^{(2)}$ and the number of key features $\kappa$ satisfying
\[
    \kappa \leq \frac{K^{(2)} \epsilon}{2}-\ln\frac{1}{\eta}-1\;,
\]
which can be easily inferred from Theorem \ref{scenariotheorem}.
It is worth mentioning that, focused learning not only significantly improves the efficiency, but it also makes our approach insensitive to significance level $\eta$ and error rate $\epsilon$, because the first phase in focused learning can provide a highly precise model, and a small number of samples are sufficient to learn the PAC model in the second phase. This will be validated in our experiments.

\subsubsection{Stepwise splitting}
\label{sec:stepwise-splitting}

\begin{figure}[t]
    \centering
        \centering
        \includegraphics[width=0.75\linewidth]{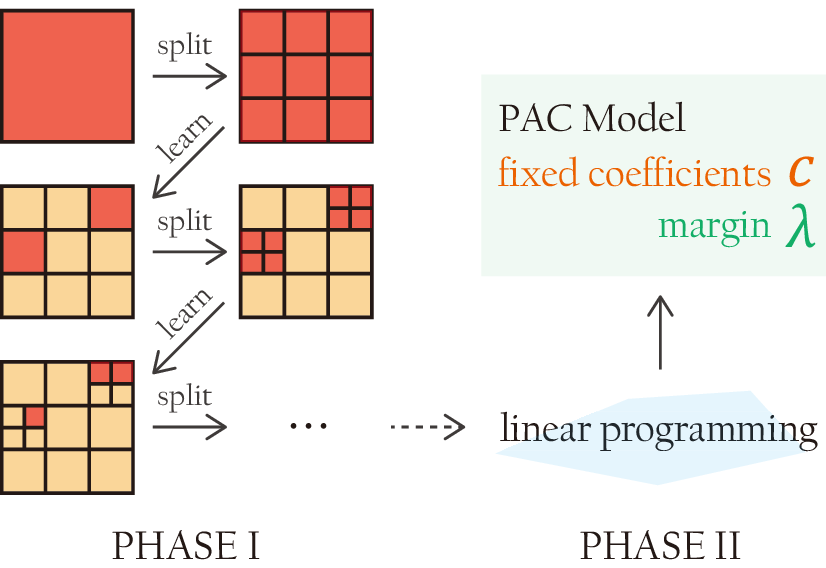}
    \caption{A workflow of the stepwise splitting procedure. The red color indicates the significant grids whose coefficients will be further refined,
    while the yellow color indicates the grids whose coefficients have been determined. }
    \label{fig:stepwisesplit}
\end{figure}

When the dimensionality of the input space is very high (e.g., ImageNet), 
The first learning phase of focused learning requires constraints generated
by tons of samples to make precise predictions on the key features,
which is very hard and even impossible to be directly solved. 
For achieving better scalability, 
we partition the dimensions of input $\{1,\ldots,m\}$ into groups $\{G_k\}$.
In an affine model $\widetilde{\Delta}_i$, 
for the variables with undetermined coefficients 
in each certain group $G_k$, they share the same coefficient $c_k$. 
Namely, the affine model has the form of $\sum_k \left(c_k \sum_{i\in G_k} x_i\right)$.
Then, a coarse model can be learned.

We compose the refinement into the procedure of focused learning aforementioned (See Fig.~\ref{fig:stepwisesplit}).
Specifically, after a coarse model is learned, we fix the coefficients for the insignificant groups and extract the key groups. The key groups are then further refined, and their coefficients are renewed by learning on a new batch of samples. We repeat this procedure iteratively until
 most coefficients of the affine model are fixed, and then we invoke linear programming to compute the rest coefficients and the margin. This iterative refinement can be regarded as multi-stage focused learning with different fineness.

In particular, for a colour image, we can use the grid to divide its pixels into groups. The image has three channels corresponding to the red, green and blue levels. As a result, each grid will generate 3 groups matching these channels, i.e. $G_{k,\mathrm{R}}$, $G_{k,\mathrm{G}}$, and $G_{k,\mathrm{B}}$. Here, we determine the significance of a grid with the $L^2$-norm of the coefficients of its groups, i.e. $(c_{k,\mathrm{R}}^2+c_{k,\mathrm{G}}^2+c_{k,\mathrm{B}}^2)^{\frac{1}{2}}$. 
Then the key groups (saying corresponding to the top $25\%$ significant grids) will be further refined in the subsequent procedure.
On ImageNet, we initially divide the image into $32\times32$ grids, with each grid of the size $7\times7$. In each refinement iteration, we split each significant grid into 4 sub-grids (see Fig.~\ref{fig:stepwisesplit}). We perform 6 iterations of such refinement and use $20\,000$ samples in each iteration. An example on stepwise splitting of an ImageNet image can be found in Fig.~\ref{fig:expsplit} in Sect.~\ref{subsec:exp3}.

\section{Experimental EVALUATION} \label{sec:experiment}
In this section, we evaluate our PAC-model robustness verification method.
 We implement our algorithm as a prototype called \deeppac.  Its implementation is based on Python 3.7.8. We use CVXPY~\cite{diamond2016cvxpy} as the modeling language for linear programming and GUROBI~\cite{gurobi} as the LP solver. Experiments are conducted on a Windows 10  PC with Intel i7 8700, GTX 1660Ti, and 16G RAM.
Three datasets MNIST~\cite{L1998Gradient}, CIFAR-10~\cite{cifar10}, and ImageNet~\cite{ILSVRC15} and 20 DNN models trained from them are used in the evaluation. The details are in Tab.~\ref{tab:networks}. We invoke our component-based learning and focused learning for all evaluations, and apply stepwise splitting for the experiment on ImageNet. All the implementation and data used in this section are publicly available\footnote{\url{https://github.com/CAS-LRJ/DeepPAC}}.


In the following, we are going to answer the research questions below.
\begin{itemize}
    \item[{\bf RQ1:}] Can \deeppac evaluate local robustness of a DNN more effectively compared with the state-of-the-art?
    \item[{\bf RQ2:}] Can \deeppac retain a reasonable accuracy with higher significance, higher error rate, and/or fewer samples?
    \item[{\bf RQ3:}] Is \deeppac scalable to DNNs with complex structure and high dimensional input?
    \item[{\bf RQ4:}] Is there a underlying relation between DNN local robustness verification and DNN testing (especially the test selection)? 
\end{itemize}

\commentout{
\doublebox{
\begin{minipage}{0.94\linewidth}
\begin{enumerate}
    \item[{\bf RQ1:}] Can \deeppac evaluate local robustness of a DNN more effectively comparing with the state-of-the-art?
    \item[{\bf RQ2:}] Can \deeppac retain a reasonable accuracy with higher significance, higher error rate, or fewer samples?
    \item[{\bf RQ3:}] Is \deeppac scalable to DNNs with complex structures and high dimensional inputs?
    \item[{\bf RQ4:}] Is there a underlying consistency between the maximum robustness radius and the testing prioritising metric?
\end{enumerate}
\end{minipage}
}
}


\begin{table}[t]
\begin{tabular}{|c|l|l|r|c|}
\hline
Dataset & Network & Defense & \#Param\, & Source \\ \hline
\multirow{12}{*}{MNIST} & FNN1 & \multirow{6}{*}{\qquad  \;\, ---} & 44.86\,K & \multirow{6}{*}{---} \\ \cline{2-2} \cline{4-4}
 & FNN2 &  & 99.71\,K &  \\ \cline{2-2} \cline{4-4}
 & FNN3 &  & 239.41\,K &  \\ \cline{2-2} \cline{4-4}
 & FNN4 &  & 360.01\,K &  \\ \cline{2-2} \cline{4-4}
 & FNN5 &  & 480.61\,K &  \\ \cline{2-2} \cline{4-4}
 & FNN6 &  & 1.65\,M &  \\ \cline{2-5} 
 & CNN1 & \qquad \;\, --- & \multirow{3}{*}{89.61\,K} & \multirow{9}{*}{ERAN} \\ \cline{2-3}
 & CNN2 &  DiffAI &  &  \\ \cline{2-3}
 & CNN3 & PGD &  &  \\ \cline{2-4}
 & CNN4 & \qquad \;\, --- & \multirow{3}{*}{1.59\,M} &  \\ \cline{2-3}
 & CNN5 & PGD, $\varepsilon = 0.1$ &  &  \\ \cline{2-3}
 & CNN6 & PGD, $\varepsilon = 0.3$ &  &  \\ \cline{1-4}
\multirow{6}{*}{CIFAR-10} & CNN1 & PGD & 125.32\,K &  \\ \cline{2-4}
 & CNN2 & PGD, $\varepsilon =2/255$ & \multirow{2}{*}{2.07\,M} &  \\ \cline{2-3}
 & CNN3 & \multirow{5}{*}{PGD, $\varepsilon =8/255$} &  &  \\ \cline{2-2} \cline{4-5} 
 & ResNet18 &  & 11.17\,M & \multirow{3}{*}{---} \\ \cline{2-2} \cline{4-4}
 & ResNet50 &  & 23.52\,M &  \\ \cline{2-2} \cline{4-4}
 & ResNet152 &  & 58.16\,M &  \\ \hline
\multirow{2}{*}{ImageNet} & ResNet50a & PGD, $\varepsilon =4/255$ & \multirow{2}{*}{25.56\,M} & \multirow{2}{*}{Madry} \\ \cline{2-3}
 & ResNet50b & PGD, $\varepsilon =8/255$ &  &  \\ \hline
\end{tabular}
\caption{Datasets and DNNs used in our evaluation. The convolutional neural networks (CNN) for MNIST and CIFAR-10 are from ERAN~\cite{ERAN}. The ResNet50 networks for ImageNet are from the python library ``Robustness'' \cite{robustness} produced by MadryLab. The rest networks are trained by ourselves.}
\label{tab:networks}
\end{table}

\subsection{Comparison on Precision} 
\label{subsect:exp1}
\begin{figure*}[t]
    \centering
        \centering
        \includegraphics[width=0.98\linewidth]{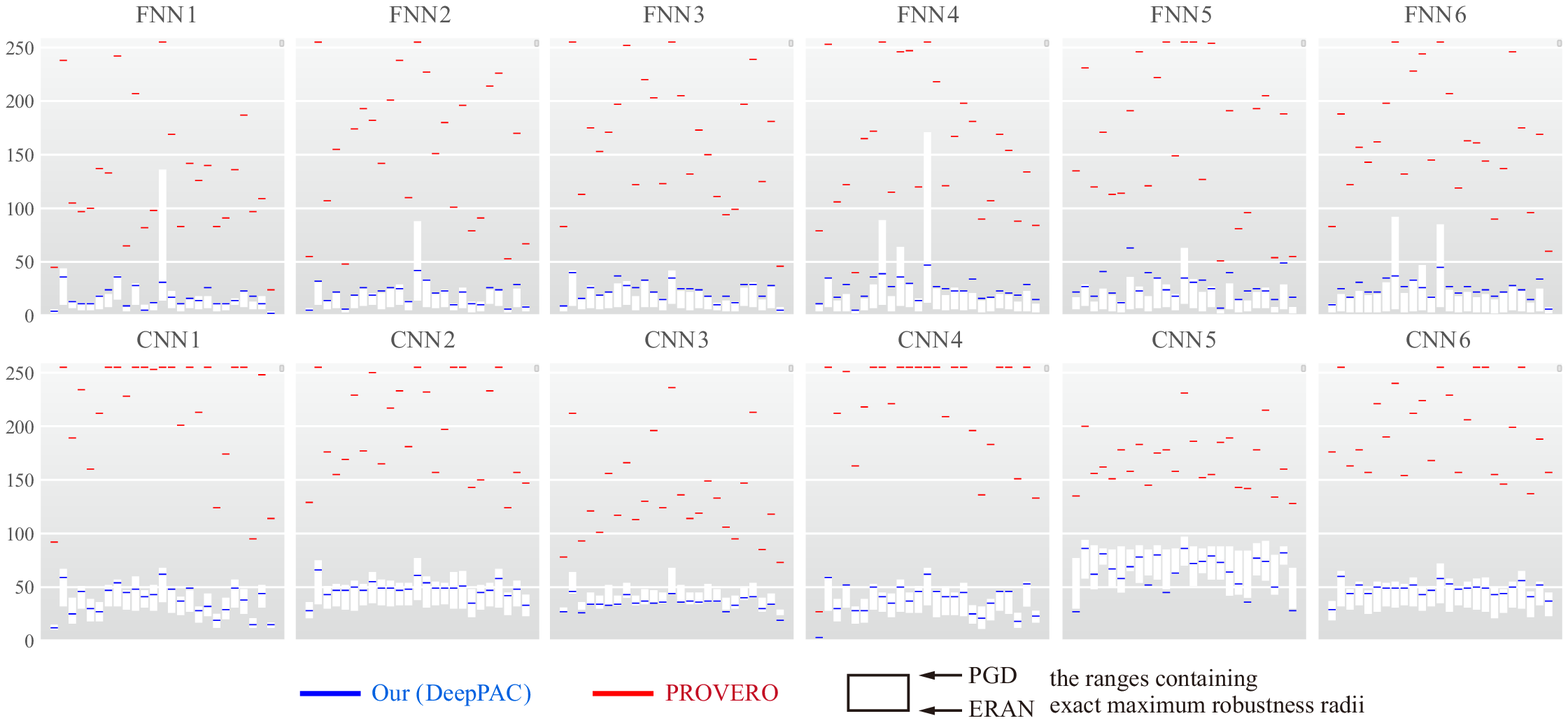}
    \caption{Each dash represents the maximum robustness radius for an input estimated by \deeppac (blue) or \provero (red), while each bar (white) gives an interval containing the exact maximum robustness radius, whose lower bound and upper bound are computed by ERAN and PGD, respectively.}
    \label{fig:netcountermnist_comb}
\end{figure*}
\commentout{
The exact maximum robustness radius is an ultimate metric to evaluate the local robustness of a DNN, which indicates the distance between the given input and the boundary of the adversarial examples. Although SMT-based tools like Marabou~\cite{marabou} are sound and complete for the local robustness verification and can theoretically compute the exact maximum robustness radius, it faces the curse of dimensionality and its running time is not acceptable in our evaluation. The exact maximum robustness radius can be estimated by a binary search over the verifiable radii for verification methods or attack radii for attack methods. }

We first apply \deeppac for evaluating DNN local robustness by computing the maximum robustness radius and compare \deeppac with the state-of-the-art statistical verification tool \provero~\cite{baluta2021scalable}, which verifies PAC robustness by statistical hypothesis testing. 
A DNN verification tool returns true or false for robustness of a DNN given a specified radius value. A binary search  will be conducted for finding the maximum robustness radius. For both \deeppac and \provero, we set the error rate $\epsilon = 0.01$ and the significance level $\eta = 0.001$. We set $K^{(1)}=2000$ and $K^{(2)}=8000$ for \deeppac.

\commentout{
The maximum robustness radius is estimated via the binary search and 
Using \deeppac, we compute the maximum robustness radius 
by verifying PAC-model robustness under the setting
of the 
The set up for error rate $\epsilon = 0.01$ and the significance level $\eta = 0.001$.
In the same setting, we also compute the maximum robustness radii by verifying PAC robustness by \provero.
}

In addition, we apply \eran \cite{deeppoly} and PGD \cite{pgd} to bound the exact maximum radius from below and from above, respectively.
\eran is a state-of-the-art DNN formal verification tool based on abstract interpretation, and PGD is a popular adversarial attack algorithm.
In the experiments, we use the PGD implementation from the commonly used Foolbox~\cite{foolbox} with $40$ iterations and a relative step size of $0.033$, which are suggested by Foolbox as a default setting.
Note that exact robustness verification SMT tools  like Marabou~\cite{marabou} cannot scale to the benchmarks used in our experiment.
\commentout{
Note that for verifying strict robustness of DNNs, 
ERAN is sound, and PGD is considered to be complete.
Thus, the maximum robustness radius computed 
by them is always a lower bound and an upper bound of the exact robustness radius, respectively.
Namely, they provide a range containing the exact robustness radius.}




We  run all the tools on the first 12 DNN models in Tab.~\ref{tab:networks} and  the detailed results are recorded in Fig.~\ref{fig:netcountermnist_comb}. 
In all cases, the maximum robustness radius estimated by the  \provero is far larger than those computed by other tools.  
In most cases, \provero ends up with a maximum robustness radius over $100$ (out of 255), which is even larger than the upper bound identified by PGD. 
This indicates that, while a DNN is proved to be PAC robust by \provero, adversarial inputs can be still rather easily found within the verified bound. In contrast, \deeppac estimates the maximum robustness radius more accurately, which falls in between the results from \eran and PGD mostly. Since the range between the estimation of \eran and PGD contains the exact maximum robustness radius, we conclude that \deeppac is a more accurate tool than \provero to analyse local robustness of DNNs.

\deeppac also successfully distinguishes  robust DNN models from non-robust ones.
It tells that the CNNs, especially the ones with defence mechanisms, are more robust against adversarial perturbations. For instance, 24 out of 25 images have a larger maximum robustness radius on CNN1 than on FNN1, and 21 images have a larger maximum robustness radius on CNN2 than on CNN1.

Other than the maximum robustness radius for a fixed input, the overall robustness of a DNN, subject to some radius value, 
 can be denoted by the rate of the inputs being robust
in a dataset, called ``robustness rate''. 
In Fig.~\ref{fig:robrate}, we show the robustness rate of 100 input images estimated by different tools on the 3 CIFAR-10 CNNs. Here, we set $K^{(1)}=20\,000$ and $K^{(2)}=10\,000$.

\begin{figure}[ht]
    \centering
        \centering
        \includegraphics[width=0.98\linewidth]{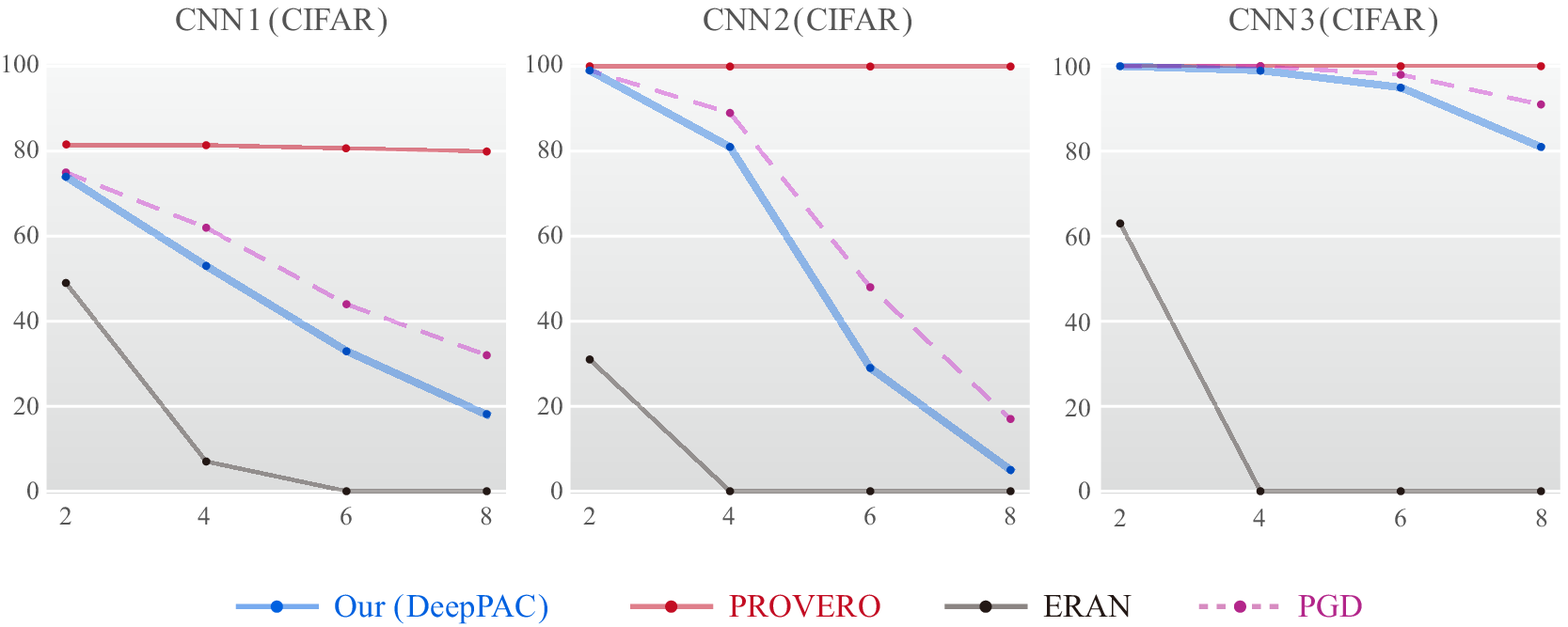}
    \caption{Robustness rate of different CNNs under the radius of 2, 4, 6, and 8 on CIFAR-10.}
    \label{fig:robrate}
\end{figure}


\provero, similarly to the earlier experiment outcome, results in robustness rate which is even higher 
than the upper bound estimation from the PGD attack, and its robustness rate result hardly changes when the robustness radius increases. All such comparisons reveal the limitations of using PAC robustness (by \provero) that the verified results are not tight enough.

\eran is a sound verification method, and the robustness rate verified by it is a strict lower bound of the exact result. However, this lower bound could be too conservative and  \eran quickly becomes not usable. In the experiments, we find that it is hard for \eran to verify a robustness radius greater than or equal to 4 (out of 255).


\deeppac verifies greater robustness rate and larger robustness radius, with high confidence and low error rate. Its results fall safely into the range bounded by \eran and PGD. We advocate \deeppac as a more practical DNN robustness analysis technique. It is shown in our experiments that, though \deeppac does not enforce 100\% guarantee, it can be applied into a wider range of adversarial settings (in contrast to \eran) and the PAC-model verification results  by \deeppac can be more trusted (in contrast to \provero) with quantified confidence (in contrast to PGD). 

\commentout{
As we expect, the robustness rate obtained by \deeppac 
is slightly lower than the failure rate of the PGD attack,
and shows a reasonable difference under various perturbation radii.
From the discussion above, we demonstrate that \deeppac is a more effective method that can be used for robustness analysis, by which we can obatin more meaningful and practical robustness rate to evaluate the overall local robustness of a DNN.
}


\noindent\doublebox{
\begin{minipage}{0.94\linewidth}
 {\bf Answer RQ1:} The maximum robustness radius estimated by \deeppac is more precise than that by \provero, and our \deeppac is a more practical DNN robustness analysis method. 
\end{minipage}
}

\subsection{\deeppac with Different Parameters}



In this part, we experiment on the three key parameters in \deeppac: the error rate $\epsilon$, the significance level $\eta$, and the number of samples $K^{(1)}$ in the first learning phase. The parameters $\eta$ and $\epsilon$ control the precision between the PAC model and the original model. The number of samples $K^{(1)}$ determines the accuracy of the first learning phase.
We evaluate \deeppac under different parameters to check the variation of the maximal robustness radius. We set either $K^{(1)}=20 000$ or $K^{(1)}=5 000$  in our evaluation and three combinations of the parameters $(\epsilon,\eta)$:  $(0.01,0.001)$, $(0.1, 0.001)$, and $(0.01,0.1)$. Here, we fix the number of key features to be fifty, i.e. $\kappa=50$, and calculate the corresponding number of samples $K^{(2)}$ in the focused learning phase. 

The results are presented in Tab.~\ref{tab:stable}.
\deeppac reveals some DNN robustness insights that were not achievable by other verification work. It is shown that, the DNNs (the ResNet family experimented) can be more robust than many may think.
The maximum robustness radius remains the same or slightly alters, along with the error rate $\eta$ and significance level $\epsilon$ varying. 
This observation also confirms that the affine model used in \deeppac abstraction converges well, and the resulting error bound is even smaller than the specified (large) error bound. Please refer to Sect.~\ref{subsec:learn_original} for more details.


\deeppac is also tolerant enough with a small sampling size. When the number of samples in the first learning phase decreases from $K^{(1)}=20,000$ to $K^{(1)}=5,000$, we can observe a minor decrease of the maximal robustness radius estimation. 
Recall that we utilise the learned model in the first phase of focused learning to extract the key features and provide coefficients to the less important features. When the sampling number decreases, the learned model would be less precise and thus make vague predictions on key features and make the resulting affine model shift from the original model. As a result, the maximum robustness radius can be smaller when we reduce the number of sampling in the first phase. In practice, as it is shown by the results in Tab.~\ref{tab:stable}, we do not observe a sudden drop of the \deeppac results when using a much smaller sampling size.

\begin{table}[ht]
\renewcommand\arraystretch{1.5}
\begin{tabular}{llcc|cc|cc}
\toprule
\multirow{3}{*}{Input Image} & \multirow{3}{*}{Network} & \multicolumn{6}{c}{$\eta,\epsilon$ and $K^{(1)}$} \\ \cline{3-8}
  &  & \multicolumn{2}{c|}{$0.01,0.001$} & \multicolumn{2}{c|}{$0.1, 0.001$} & \multicolumn{2}{c}{$0.01,0.1$} \\ \cline{3-8}
 &  & 20K & 5K & 20K & 5K & 20K & 5K \\ \hline
\multirow{3}{*}{{\begin{minipage}{0.08\textwidth}
\centering
 \includegraphics[width=1\linewidth]{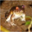}
\end{minipage}}} & \cellcolor{Gray} ResNet18 & \cellcolor{Gray} 5 & \cellcolor{Gray} 4 & \cellcolor{Gray} 5 & \cellcolor{Gray} 4 & \cellcolor{Gray} 5 & \cellcolor{Gray} 4 \\ 
 & \cellcolor{Gray} ResNet50 & \cellcolor{Gray} 8 & \cellcolor{Gray} 8 & \cellcolor{Gray} 8 & \cellcolor{Gray} 8 & \cellcolor{Gray} 9 & \cellcolor{Gray} 8 \\ 
 & \cellcolor{Gray} ResNet152 & \cellcolor{Gray} 5 & \cellcolor{Gray} 5 & \cellcolor{Gray} 5 & \cellcolor{Gray} 5 & \cellcolor{Gray} 5 & \cellcolor{Gray} 5 \\ 
\multirow{3}{*}{{\begin{minipage}{0.08\textwidth}
\centering
\includegraphics[width=1\linewidth]{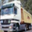}
\end{minipage}}} & ResNet18 & 16 & 14 & 15 & 14 & 15 & 14 \\ 
 & ResNet50 & 12 & 11 & 12 & 12 & 12 & 11 \\ 
 & ResNet152 & 10 & 9 & 10 & 9 & 10 & 9 \\ 
\multirow{3}{*}{{\begin{minipage}{0.08\textwidth}
\centering
\includegraphics[width=1\linewidth]{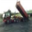}
\end{minipage}}} & \cellcolor{Gray}ResNet18 & \cellcolor{Gray}11 & \cellcolor{Gray}10 & \cellcolor{Gray}11 & \cellcolor{Gray}10 & \cellcolor{Gray}11 & \cellcolor{Gray}10 \\ 
 & \cellcolor{Gray}ResNet50 & \cellcolor{Gray}6 & \cellcolor{Gray}5 & \cellcolor{Gray}6 & \cellcolor{Gray}5 & \cellcolor{Gray}6 & \cellcolor{Gray}5 \\ 
 & \cellcolor{Gray}ResNet152 & \cellcolor{Gray}9 & \cellcolor{Gray}8 & \cellcolor{Gray}9 & \cellcolor{Gray}8 & \cellcolor{Gray}9 & \cellcolor{Gray}8 \\ 
\multirow{3}{*}{{\begin{minipage}{0.08\textwidth}
\centering
\includegraphics[width=1\linewidth]{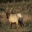}
\end{minipage}}} & ResNet18 & 1 & 1 & 1 & 1 & 1 & 1 \\ 
 & ResNet50 & 3 & 3 & 3 & 3 & 3 & 3 \\ 
 & ResNet152 & 6 & 5 & 6 & 5 & 6 & 5 \\ 
\multirow{3}{*}{{\begin{minipage}{0.08\textwidth}
\centering
\includegraphics[width=1\linewidth]{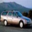}
\end{minipage}}} & \cellcolor{Gray} ResNet18 & \cellcolor{Gray}16 & \cellcolor{Gray}13 & \cellcolor{Gray}16 & \cellcolor{Gray}14 & \cellcolor{Gray}16 & \cellcolor{Gray}14 \\ 
 & \cellcolor{Gray} ResNet50 & \cellcolor{Gray}17 & \cellcolor{Gray}15 & \cellcolor{Gray}17 & \cellcolor{Gray}15 & \cellcolor{Gray}17 & \cellcolor{Gray}15 \\ 
 & \cellcolor{Gray} ResNet152 & \cellcolor{Gray}12 & \cellcolor{Gray}10 & \cellcolor{Gray}12 & \cellcolor{Gray}10 & \cellcolor{Gray}12 & \cellcolor{Gray}10 \\ \toprule
\end{tabular}
\caption{The maximum robustness radius estimated by \deeppac on CIFAR-10 dataset using different parameters, i.e. significance level $\eta$, error rate $\epsilon$, and the number of samples in the first learning phase $K^{(1)}$.}
\label{tab:stable}
\end{table}

\noindent\doublebox{
\begin{minipage}{0.94\linewidth}
{\bf Answer RQ2:} 
\deeppac shows good tolerance to different configurations of its parameters such as the error rate $\epsilon$, the significance level $\eta$, and the number of samples $K^{(1)}$.
\end{minipage}
}\\

\subsection{Scalability}\label{subsec:exp3}
Robustness verification is a well-known difficult problem on complex networks with high-dimensional data. Most qualitative verification methods meet a bottleneck in the size and structure of the DNN. The fastest abstract domain in ERAN is GPUPoly~\cite{deeppolygpu}, a GPU accelerated version of DeepPoly. The GPUPoly can verify a ResNet18 model on the CIFAR-10 dataset with an average time of 1\,021 seconds under the support of an Nvidia Tesla V100 GPU. To the best of our knowledge, ERAN does not support models on ImageNet, which makes it limited in real-life scenarios.
The statistical methods alleviate this dilemma and extend their use further.
The state-of-the-art PAC robustness verifier PROVERO needs to draw 737\,297 samples for VGG16 and 722\,979 samples for VGG19 on average for each verification case on ImageNet. The average running time is near 2208.9 seconds and 2168.9 seconds (0.003 seconds per sample) under the support of an Nvidia Tesla V100 GPU. We will show that \deeppac can verify the tighter PAC-model robustness on ImageNet with less samples and time on much larger ResNet50 models.

In this experiment, we apply \deeppac to the  start-of-the-art DNN with high resolution ImageNet images. The two ResNet50 networks are from the python package named ``robustness''~\cite{robustness}. 
We check PAC-model robustness of the two DNNs with the same radius $4$ (out of $255$). The first evaluation is on a subset of ImageNet images from 10 classes \cite{ImageNette}. 
The second one includes ImageNet images of all 1,000 classes and the untargeted score difference function is configured for \deeppac. 
To deal with ImageNet, the stepwise splitting mechanism in Sect.~\ref{sec:stepwise-splitting} is adopted.
An illustrating example of the stepwise splitting is given in Fig.~\ref{fig:expsplit}. As we expect, the splitting refinement procedure successfully identifies the significant features of a golf ball, i.e. the boundary and the logo. It maintains the accuracy of the learned model with much less running time.
The results are shown in Tab.~\ref{tab:Imagenet}.

\begin{figure}[t]
    \centering
        \centering
        \includegraphics[width=0.9\linewidth]{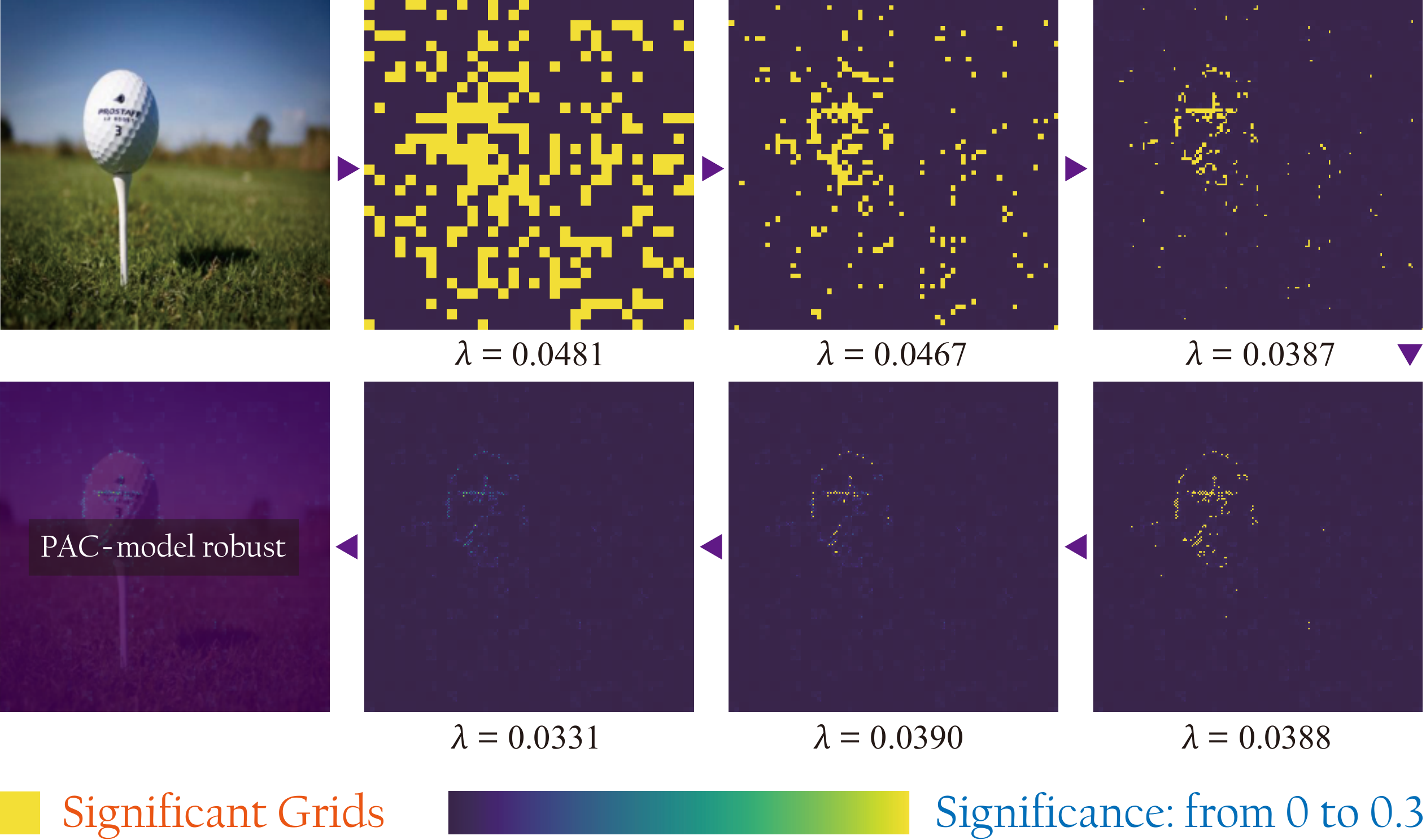}
    \caption{Stepwise splitting procedures of \deeppac, illustrated by heatmaps of grid significance. Top 25\% significant grids are colored yellow in the heatmap, which is split and refined iteratively. The margin $\lambda$ of different refinement stage is under the heatmap.}
    \label{fig:expsplit}
\end{figure}

For the 10-class setup, we evaluate the PAC-model robustness on 50 images and it takes less than 1800 seconds on each case. \deeppac finds out 30 and 29 cases PAC-model robust for ResNet50a and ResNet50b, respectively. 
Because the two models have been defensed, when we perform the PGD attack, only one adversarial examples were found for each model, which means that PGD gives no conclusion for the robustness evaluation on most cases under this setting. For the 1000-class dataset, the untargeted version of \deeppac has even better efficiency with the running time of less than 800 seconds each, which mainly benefits from reducing the score difference function to the untargeted one. \deeppac proves 10 and 6 out of 50 cases to be PAC-model robust on the 1000-class setup, respectively. For both setups, \deeppac uses 121\,600 samples to learn a PAC model effectively.

\begin{table}[h]
\begin{tabular}{llrrrr}
\toprule
Method & Network & Robust & Min & Max & Avg \\ \hline
\multirow{2}{*}{\begin{tabular}[c]{@{}l@{}}Targeted\\ (10 classes)\end{tabular}} & ResNet50a & 30/50 & 1736.5 & 1768.8 & 1751.8 \\ 
 & ResNet50b & 29/50 & 1722.1 & 1781.5 & 1746.5 \\ \hline
\multirow{2}{*}{\begin{tabular}[c]{@{}l@{}}Untargeted\\ (1000 classes)\end{tabular}} & ResNet50a & 10/50 & 779.2 & 785.3 & 781.7 \\ 
 & ResNet50b & 6/50 & 775.7 & 783.8 & 778.3 \\ \toprule
\end{tabular}
\caption{The performance of \deeppac analysing the two ResNet50 models for ImageNet. ``Robust'' represents the robustness rate. ``Min'', ``Max'', and  ``Avg'' are the minimum, maximum, and average of the running time (second), respectively.}
\label{tab:Imagenet}
\end{table}


\noindent\doublebox{
\begin{minipage}{0.94\linewidth}
    {\bf Answer RQ3:} The \deeppac robustness analysis scales well to complex DNNs with high-dimensional data like ImageNet,  
    which is not achieved by previous formal verification tools. 
    It shows superiority to \provero in both running time and the number of samples.
\end{minipage}
}\\

\subsection{Relation with Testing Prioritising Metric}

We also believe that there is a positive impact from practical DNN verification work like \deeppac on DNN testing. For example, the tool \deepgini uses Gini index, which measures the confidence of a DNN prediction on the corresponding input, to sort the testing inputs.
In Tab.~\ref{tab:testingcor}, we report the Pearson correlation coefficient between the \deepgini indices and the maximal robustness radii obtained by \deeppac, \eran and \provero from the experiment in Sect.~\ref{subsect:exp1}.


As in Tab.~\ref{tab:testingcor}, the maximum robustness radius is  correlated to the \deepgini index, a larger absolute value of the coefficient implies a stronger correlation. 
It reveals the data that has low prediction confidence is also prone to be lack robustness. From this phenomenon, we believe DeepGini can be also helpful in data selection for robustness analysis.
Interestingly, the maximum robustness radius computed by our \deeppac has higher correlations with \deepgini index on the CNNs, which are more complex, than on FNNs.
Furthermore, \deeppac shows the strongest correlation
on the CNNs trained with defense mechanisms, 
while the correlation between \provero or \eran and \deepgini is relatively weak on these networks. 
Intuitively, complex models with defense are expected to be more robust. Again, we regard this comparison result as the evidence from DNN testing to support the superior of \deeppac over other DNN verification tools.
From the perspective of testing technique, it is promising to combine these two methods for achieving test selection with guarantee.

\begin{table}[t]
\begin{tabular}{@{}l@{\qquad}r@{\qquad}r@{\qquad}r}
\toprule
Network & {\sc\textbf{DeepPAC}} & \eran & \provero \\ \hline
FNN1 & -0.3628 & -0.3437 & \textbf{-0.3968} \\ 
FNN2 & -0.4851 & -0.4353 & \textbf{-0.5142} \\ 
FNN3 & -0.4174 & -0.3677 & \textbf{-0.4223} \\ 
FNN4 & \textbf{-0.5264} & -0.4722 & -0.5234 \\ 
FNN5 & -0.4465 & \textbf{-0.6016} & -0.5916 \\ 
FNN6 & \textbf{-0.4538} & -0.2747 & -0.3949 \\ 
\hline
CNN1 &  -0.7340 & -0.7345 & \textbf{-0.8223} \\ 
CNN2 $\star$ & \textbf{-0.6482} & -0.6478 & -0.4527 \\ 
CNN3 $\star$ &\textbf{-0.7216} & -0.6728 & -0.5218 \\ 
CNN4    &-0.6035 & -0.6127 & \textbf{-0.7771} \\ 
CNN5 $\star$ &\textbf{-0.7448} & -0.6833 & -0.3874 \\ 
CNN6 $\star$ &\textbf{-0.6498} & -0.6094 & -0.4763 \\ \toprule
\end{tabular}
\caption{The Pearson correlation coefficient between the maximum robustness radius estimation and the DeepGini index. The DNNs are marked by ``$\star$'' if they are trained with defense mechanisms.}
\label{tab:testingcor}
\end{table}

\noindent\doublebox{
\begin{minipage}{0.94\linewidth}
{\bf Answer RQ4:}
The maximum robustness radius estimated by \deeppac, \eran, and \provero are all correlated to the \deepgini index, where \deeppac and \deepgini show the strongest correlation on robust models.
\end{minipage}
}\\

\subsection{Case Study: Verifying Cloud Service API}
To show the practicality of \deeppac, we apply it to analyse the robustness of  black-box models for real-world cloud services.
The case we study here is the image recognition API provided by Baidu AI Cloud\footnote{\url{https://ai.baidu.com/tech/imagerecognition/general}},
which accepts an image and returns a pair list in the form of $(\mathrm{label}_i,\mathrm{score}_i)$ to indicate the top classes the input recognised to be.
We use the image of a dandelion as the input, which is an official example in its illustration. 

By setting $\eta = 0.001$ and $\epsilon = 0.01$, we verify the PAC-model robustness for its top label ``dandelion'' within the radius of $5/255$. A total of 49,600 samples are utilised in the whole procedure. 
By \deeppac, we obtain the PAC-model of the difference function, but unfortunately, its maximal value in the input $L_\infty$ ball is larger than zero.
As an intermediate output, we generate a potential adversarial example via the PAC model.
By feeding it back into the model, we checked that it is a true adversarial example with ``sky'' as its top label (see Fig.~\ref{fig:apiresults}).

An interesting observation is that the labels output by the image recognition API may be not  independent. For instance, the class labels ``dandelion'' and ``plant'' may appear in the output list at the same time, and both of them can be considered correct labels. Therefore, we believe that in the future new forms of DNN robustness properties also need to be studied 
e.g., the sum of the output scores for the correct labels (``dandelion'' and ``plant'') should be larger than some threshold.
\deeppac is a promising tool to cope with these emerging challenges when considering real-world applications of DNN robustness analysis, by conveniently adjusting its difference function. 

\begin{figure}[t]
    \centering
    \subfloat{%
        \includegraphics[width=0.45\linewidth]{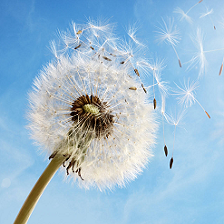}}
    \qquad
    \subfloat{%
        \includegraphics[width=0.45\linewidth]{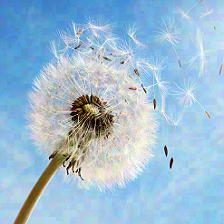}}
    \caption{The original image (left) gain the score $(\mathrm{dandelion}:0.758, \mathrm{sky}:0.600)$, and the adversarial example gain the score $(\mathrm{sky}:0.791, \mathrm{dandelion}:0.621)$.}
    \label{fig:apiresults}
\end{figure}

\section{Related Work}
\label{sec:related_work}

Here we discuss more results on the verification, adversarial attacks and testing for DNNs. A number of formal verification techniques have been proposed for DNNs, including constraint-solving~\cite{reluplex,planet,DBLP:conf/aaai/NarodytskaKRSW18,DBLP:journals/jmlr/BunelLTTKK20,DBLP:conf/cvpr/LinYCZLLH19,Simplify2020,Deepsafe,wangji}, abstract interpretation~\cite{AI2,deepsymbol,deepz,deeppoly,deepsrgrextended}, layer-by-layer exhaustive search~\cite{DBLP:conf/cav/HuangKWW17}, global optimisation~\cite{RHK2018,DBLP:conf/nfm/DuttaJST18,DBLP:conf/ijcai/RuanWSHKK19}, convex relaxation~\cite{AircraftSafeLinear,ReluDiff,NeuroDiff}, functional approximation~\cite{fastlin}, reduction to two-player games~\cite{DBLP:conf/tacas/WickerHK18,DBLP:journals/tcs/WuWRHK20}, and star-set-based abstraction~\cite{starset19,imagestar20}. 
Sampling-based methods are adopted to probabilistic robustness verification in \cite{statistical19,DBLP:conf/icml/WengCNSBOD19,Probalistic2019,BayesianAAAI,anderson2020certifying,anderson2020datadriven}. Most of them provide sound DNN robustness estimation in the form of a norm ball, but typically for very small networks or with pessimistic estimation of the norm ball radius.
By contrast, statistical methods \cite{statistical19,DBLP:conf/icml/WengCNSBOD19,baluta2021scalable,quanti4,quanti5,quanti6,BayesianIJCAI, huangpei01} are more efficient and scalable when the structure of DNNs is complex. The primary difference between these methods and \deeppac is that our method is model-based and thus more accurate. 
We use samples to learn a relatively simple model of the DNN with the PAC guarantee via scenario optimisation and gain more insights to the analysis of adversarial  robustness.
The generation of adversarial inputs \cite{SZSBEGF2014} itself has been widely studied by a rich literature of adversarial attack methods. 
Some most well-known robustness attack methods include Fast Gradient Sign \cite{fgsm}, Jacobian-based saliency map approach \cite{DBLP:conf/eurosp/PapernotMJFCS16}, C\&W attack \cite{CW-Attacks}, etc. 
Though adversarial attack methods generate adversarial inputs efficiently, they cannot enforce guarantee of any form for the DNN robustness. Testing is still the primary approach for certifying the use of software products and services. In recent years, significant work has been done for the testing for DNNs such as test coverage criteria specialised for DNNS \cite{pei2017deepxplore,ma2018deepgauge,sun2019structural,kim2019guiding,yan2020correlations} and different testing techniques adopted for DNNs \cite{tian2018deeptest,zhang2018deeproad,xie2019deephunter,ma2018deepmutation,sun2020automatic,wang2021robot,humbatova2021deepcrime,riccio2020model,ma2018mode,yan2021exposing}. In particular, our experiments show that the results from \deeppac are consistent with the DNN testing work for prioritising test inputs \cite{feng2020deepgini,wang2021prioritizing}, but with a stronger guarantee. This highlights again that \deeppac is a practical verification method for DNN robustness.

\section{Conclusion and Future Work} \label{sec:conclusion}
We propose \deeppac, a method based on model learning to analyse the PAC-model robustness of DNNs in a local region. With the scenario optimisation technique, we learn a PAC model which approximates the DNN within a uniformly bounded margin with a PAC guarantee. With the learned PAC model, we can verify PAC-model robustness properties under specified confidence and error rate.
Experimental results confirm that \deeppac scales well on large networks, and is suitable for practical DNN verification tasks. 
As for future work, we plan to learn more complex PAC models rather than the simple affine models, and we are particularly interested in exploring the combination of practical DNN verification by \deeppac and DNN testing methods following the preliminary results.





\section*{Acknowledgements}
This work has been partially supported by Key-Area Research and Development Program of Guangdong Province (Grant No. 2018B010107004), Guangzhou Basic and Applied Basic Research Project (Grant No. 202102021304), 
National Natural Science Foundation of China (Grant No. 61836005), and Open Project of Shanghai Key Laboratory of Trustworthy Computing (Grant No. OP202001).

\bibliographystyle{ACM-Reference-Format}
\bibliography{all.bib}


\end{document}